\setlist[enumerate]{leftmargin=.5in}
\setlist[itemize]{leftmargin=.5in}
\newcommand{\mcl}{\mathcal}
\newcommand{\mbb}{\mathbb}
\newcommand{\Diam}{{\rm Diam}}
\newcommand{\dd}{{\rm d}}
\newcommand{\RR}{\mathbb R}
\newcommand{\Y}{\mcl{Y}}
\newcommand{\U}{\mcl{U}}
\newcommand{\V}{{\mcl{V}}}
\newcommand{\W}{{\mcl{W}}}
\newcommand{\Cstab}{C_{{\rm stab}}}
\newcommand{\Lip}{{\rm Lip}}
\newcommand{\T}{\mathcal{T}}
\newcommand{\F}{\mathcal{F}}
\newcommand{\D}{\mathcal{D}}
\newcommand{\PP}{\mbb{P}}
\newcommand{\R}{\mbb{R}}
\newcommand{\EE}{\mbb{E}}
\definecolor{darkred}{rgb}{.7,0,0}
\definecolor{darkgreen}{rgb}{.15,.55,0}
\definecolor{darkblue}{rgb}{0,0,0.7}
\theoremstyle{plain}
\DeclareMathOperator*{\argmin}{arg\,min}
\DeclareMathOperator*{\argmax}{arg\,max}
\title{Error Analysis of Bayesian Inverse Problems 
with Generative Priors}
\author{ Bamdad Hosseini\thanks{University of Washington, Seattle, WA 98195, USA
  (\email{bamdadh@uw.edu, ziqih2@uw.edu} \\ 
  This work was  supported by the National Science Foundation grants 
NSF-DMS-2208535 (Machine Learning for Bayesian Inverse Problems) and 
NSF-DMS-2337678 (CAREER: Gaussian Processes for Scientific Machine
Learning: Theoretical Analysis and Computational Algorithms).
  )} 
   \and Ziqi Huang\footnotemark[1]
  }
\begin{document}
\newcommand{\RefM}{\eta}
\newcommand{\TargetM}{\nu}

\maketitle

\begin{abstract} 
Data-driven methods for the solution of inverse problems have become widely popular in 
recent years thanks to the rise of machine learning techniques. A popular approach concerns 
the training of a generative model on additional data to learn a bespoke prior for the 
problem at hand. In this article we present an analysis for such problems by 
presenting quantitative error bounds for minimum Wasserstein-2 generative models for the prior. 
We show that under some assumptions, the error in the posterior due to the generative prior will inherit the same rate as the prior with respect to the Wasserstein-1 distance. 
We further present numerical experiments that verify that aspects of our error analysis manifest in some benchmarks followed by an elliptic PDE inverse problem where a generative prior 
is used to model a non-stationary field.
\end{abstract}

\begin{keywords} 
Bayesian Inverse problems, Data-driven inference, Generative Models, Wasserstein Metric 
\end{keywords}

\begin{AMS}
62F15, 
62G86, 
49Q22. 
\end{AMS}

\sloppy
\section{Introduction}\label{sec:Introduction}

The Bayesian approach to inverse problems \cite{stuart2010inverse,kaipio2005statistical} is now central to many computational frameworks in scientific computing and 
uncertainty quantification (UQ) \cite{riedmaier2021unified, zhang2020basic, sullivan2015introduction}. In its most 
general form Bayes' rule is expressed in terms of the Radon-Nikodym 
derivative of the posterior measure 
with respect to the prior\footnote{We use the notation 
$\mu(f) = \int_\U f(u) \mu(\dd u)$ for a probability measure $\mu$
and a measurable function $f$.},
\begin{equation}\label{bayes-rule}
    \frac{\dd \nu}{\dd \mu}(u) = \frac{1}{Z(y)}
    \exp(-\Phi(u;y)), \qquad Z(y) = \mu \big(  \exp( - \Phi(u; y) ) 
  \big),  
\end{equation}
where $u$ denotes the {\it unknown parameter} of interest that is 
assumed to be in a Banach  space $\U$ while
$y$ denotes the {\it observations or measurements} for the problem 
which may also belong to a Banach space $\Y$. The two are 
connected via the {\it likelihood potential} $\Phi: \U \times \Y 
\to \RR$ (also known as the negative log-likelihood function)
which encodes the likelihood of observing $y$ given $u$. 
$\mu \in \PP(\U)$ is the {\it prior} probability measure, that 
encodes our prior knowledge about $u$ while $\nu \in \PP(\U)$ 
is the {\it posterior} probability measure that represents an updated prior after observing $y$. Finally, $Z(y)$ is a 
normalizing constant referred to as the {\it evidence}, which 
ensures that $\nu$ is a well-defined probability measure. 

Since the observations $y$ are often limited and noisy, 
the exact recovery of $u$ is impossible which motivates 
the use of Bayes' rule in the first place. Broadly speaking, if the 
observations $y$ were not informative, then the likelihood potential
$\Phi$ would be small, which means that many values of $u$ can explain 
the observations. In this case we expect that the posterior $\nu$ would 
be close to the prior $\mu$. Conversely, when the observations are very informative, we expect $\nu$ to concentrate around the true value of $u$. 
Many real-world applications of interest fall in the weakly or mildly informative observation regime, in which case, 
the prior has a major influence on the shape and support of the posterior 
measure $\nu$. It is precisely in these cases where the choice of the prior 
measure $\mu$ becomes extremely important as it directly influences 
the quality of the posterior as well as our UQ estimates  
in downstream tasks. 

Hence, the design and choice of the prior $\mu$ is of great interest 
in the theory, application, and algorithmic development
of Bayesian inverse problems (BIPs). Traditionally, the priors 
are designed by experts by balancing theoretical and algorithmic 
convenience with the specific needs of the problem at hand; see for example \cite{gelman2013bayesian} or \cite[Sec.~3.3]{kaipio2005statistical} as well as  \Cref{sec:literature-review}.
However, in the past decade, thanks to the rise of 
machine learning (ML) and generative modeling, a new 
class of methodologies have been proposed where the prior 
$\mu$ is learned or designed in an adaptive manner. This 
class largely falls under the category of data-driven 
methods for solving inverse problems in the parlance of
\cite{arridge2019solving, ongie2020deep, mukherjee2023learned}.

We can categorize data-driven inversion methods under 
three broad families: (1) learned prior models,
where an ML model is trained on an additional empirical 
data set to learn an empirical prior $\widehat{\mu}$ in 
the Bayesian context or equivalently a regularizer in the 
deterministic context \cite{bora2017compressed,lunz2018adversarial,mukherjee2024data, bohra2023bayesian,patel2022solution, ongie2020deep};
(2) plug-and-play priors that utilize  
denoising operators to learn algorithmic priors and regularizers 
\cite{venkatakrishnan2013plug,zhang2017learning, zhang2022plug,laumont2022bayesian}; and (3) the more recent class of simulation-based inference methods 
that circumvent the need for explicit knowledge of the likelihood 
potential or the prior measure and only require black-box 
methods that can sample the prior and simulate the observations
\cite{lueckmann2021benchmarking,song2021solving, baptista2024conditional,zammit2025neural}.
Our goal in this article is to develop an error analysis for the 
first class of BIP methods that utilize a learned prior, 
perhaps using a generative model. 

\subsection{Problem statement and summary of contributions}
\label{sec:problem-statement-and-contribution-summary}

We consider an approximation to \eqref{bayes-rule} of the form 
\begin{equation}\label{eq:bayes-rule-approx}
    \frac{\dd \widehat{\nu}}{\dd \widehat{\mu}}(u)
    = \frac{1}{\widehat{Z}(y)} \exp\left( - \Phi(u; y) \right),
    \qquad \widehat{Z}(y) = \widehat{\mu} \big( \exp( - \Phi(u; y) ) \big),
\end{equation}
where $\widehat{\nu}$ is an approximate posterior 
measure that arises due to an approximate prior $\widehat{\mu}$--
note that we have assumed that the likelihood potential $\Phi$ is 
fixed as we are primarily interested in prior approximations, 
although we will discuss likelihood perturbations later. 
We further assume that $\widehat{\mu}$ is defined as the 
pushforward of a reference measure $\eta \in \PP(\U)$, that is,
$\widehat{\mu} = \widehat{T} \# \eta,$
where $\#$ denotes the pushforward operator\footnote{Given a measure $\eta \in \PP(\U)$
and a measurable map $T: \U \to \V$, we define $T \# \eta \in \PP(\V)$
via the identity $T \# \eta (A) = \eta (T^{-1}(A))$ for any 
measurable set $A$ with $T^{-1}(A)$ denoting the pre-image of $A$
under $T$.} and $\widehat{T}: \U \to \U$ is a transport map which represents a generative model. The transport model encompasses many popular generative models including 
Generative Adversarial Nets (GANs) \cite{goodfellow2020generative}, 
Normalizing flows (NFs) \cite{kobyzev2021normalizing}, 
and Flow Matching \cite{lipman2023flow}.

\begin{quote}
    \emph{Our primary goal is to quantify the error between 
    the approximate posterior $\widehat{\nu}$
    and the true posterior $\nu$, due to the limitations 
    in computing $\widehat{T}$ arising from finite training 
    data  and limited  parameterizations.
    }
\end{quote}

We emphasize that we are mainly concerned here with the 
approximation of the posterior $\nu$ due to the approximation of 
the prior $\mu$ with the generative model $\widehat{\mu}$, that is, 
we view $\mu$ as a ground truth prior that the generative model 
targets and so we are not concerned with the performance of 
the true posterior $\nu$ in terms of its 
concentration around the true value of the  unknown 
parameter $u$.

We make the following four contributions towards the above goal
in the setting where $\U \subseteq \R^d$ for $d \ge 1$:

\begin{enumerate}[label=(\roman*)]
    \item Writing $\W_p$ for $p=1,2$ to denote the 
    Wasserstein-$p$ distance (see \Cref{subsec:w1-pert}
    for definition) we prove a generic bound of the form 
    \begin{equation*}
        \W_1(\nu, \widehat{\nu} ) \le \Cstab \W_2(\mu, \widehat{\mu}),
    \end{equation*}
    where $\Cstab >0$ is a constant that depends on 
    regularity of the likelihood $\Phi$ and some moments 
    of $\mu$ and $\widehat{\mu}$. The precise result is given 
    in \Cref{thm:main-1-wasserstein-stability} building on  \cite{sprungk2020local,garbuno2023bayesian}.

    \item Consider any  map 
        $\widehat{T} \in \argmin_{T \in \widehat{\T}} 
        \W_2( T \# \eta, \mu^N),$
    where $\widehat{\T}$ denotes an approximation 
    class such as neural nets or polynomials, and 
    $\mu^N$ denotes the empirical measure 
    associated with $N$ i.i.d. samples from $\mu$. 
    We then prove, under appropriate assumptions, 
    that for any $\varepsilon > 0$ and with probability 
    $1 - C N^{-1/d}/\varepsilon$ (with some constant $C>0$) 
    it holds that
    \begin{equation*}
        \W_2(\widehat{\mu}, \mu) 
        \lesssim \inf_{T \in \widehat{\T}} 
        \| T - T^\dagger \|_{L^2_\eta} + \varepsilon,
    \end{equation*}
    where $\lesssim$ contains independent constants and 
    $T^\dagger$ is any map that satisfies $T^\dagger \# \eta = \mu$.
    The precise statement is given in \Cref{sec:prior-bounds}.

    \item Under further assumptions we extend the 
    above bound to the posteriors, showing that in the same 
    high-probability event, we have 
    \begin{equation*}
        \W_1(\widehat{\nu}, \nu) \lesssim 
        \inf_{T \in \widehat{\T}} 
        \| T - T^\dagger \|_{L^2_\eta} + \varepsilon + \delta
    \end{equation*}
    where $\delta >0$ is an additional bias term that encodes 
    the tail properties of the prior $\mu$. The precise statement 
    can be found in \Cref{sec:posterior-bounds}.

    \item Finally, we present a number of numerical experiments aimed 
    at verifying our theoretical bounds, in particular
    the result of item (i). We also present a nonlinear PDE inverse 
    problem with a generative prior that elucidates the 
    benefits of using generative priors to enhance the 
    performance of simple Markov chain Monte Carlo (MCMC) 
    methods for solving high-dimensional BIPs with 
    highly non-Gaussian priors.
    
\end{enumerate}

\subsection{Literature Review}\label{sec:literature-review}
We focus our review of the relevant literature to recent works 
on data-driven techniques for inverse problems. For a general 
overview of the Bayesian approach to inverse problems 
we refer the reader to \cite{stuart2010inverse,kaipio2005statistical}. 


\paragraph{Data-driven approaches to inverse problems}
Data-driven approaches to inverse problems became popular in the last 
decade alongside the wide adoption of ML techniques for imaging \cite{arridge2019solving}. Many data-driven approaches were widely 
used in inverse problems in the past, most notably, reduced order models 
\cite{cui2015data} and emulators \cite{kennedy2002bayesian} for 
likelihoods and forward maps. However, the more modern data-driven 
techniques are focused on the automated modeling of 
prior information. As opposed to classic inverse problems, where one 
often employs a smoothness prior, e.g. Tikhonov regularization 
\cite{vogel2002computational}, here side information or data is assumed 
to be available that can be used to enhance the prior. We already 
mentioned the three classes of data-driven priors: (1) 
learned priors \cite{bora2017compressed,lunz2018adversarial,mukherjee2024data, bohra2023bayesian,patel2022solution,ongie2020deep, dimakis2022deep, duff2024regularising, habring2022generative}; 
plug-and-play priors \cite{venkatakrishnan2013plug,zhang2017learning, zhang2022plug,laumont2022bayesian}; 
and simulation-based inference methods \cite{lueckmann2021benchmarking,song2021solving, baptista2024conditional,zammit2025neural}. 

Our work is focused on the analysis of algorithms for class (1) where we assume there exists, a priori, a data set of "typical" possible choices 
of the unknown parameter, e.g., a collection of MRI or CT images. 
Then the goal of data-driven inference is to build a bespoke 
prior for this data set, often using a generative model. 
While algorithmic and methodological developments in this space 
have attracted a lot of attention, to our knowledge the theoretical 
analysis of inverse problems with generative or adaptive priors 
remains limited. Previous works \cite{bora2017compressed, shah2018solving, jalal2021robust, adcock2025many} 
considered the "compressed sensing" setting where 
the forward map is linear (typically a random matrix) and 
the prior is generative, and give theoretical results and guarantees 
on the recovery of the latent representation of the parameters 
in the domain of the generative prior. While \cite{jalal2021robust, adcock2025many} have 
the closest analysis to our work, namely considering Wasserstein perturbations of the prior, their theory remains limited to linear forward maps and gives recovery guarantees for the parameter provided that the 
prior is "good enough" in a Wasserstein sense.
In comparison, our theory is simple and covers both linear and nonlinear problems as it directly quantifies the error in approximating the 
posterior measure due to learning the prior. We do not give recovery 
guarantees, however, our convergence results for the priors 
can be combined with the results in \cite{jalal2021robust,adcock2025many}
to further understand when their recovery guarantees hold.

Our analysis also has some bearing on class (3) methodologies, that is,
amortized or simulation-based inference \cite{baptista2024conditional, hosseini2025conditional, al2025error, alfonso2023generative, orozco2025aspire}. Broadly 
speaking, these models utilize existing or simulated data sets 
of pairs of parameters $u$ and data $y$ to directly 
learn a conditional generative model for the posterior $\nu$. 
While the prior $\mu$ is never explicitly identified or 
learned from data, it is implicitly learned through the data set 
often using similar generative models that are used to learn the 
priors in other types of data-driven inverse problems. Hence, we 
conjecture that our type of analysis can be extended to such 
conditional generative models, as is done in \cite{al2025error} for 
the case of triangular OT maps.

\paragraph{Error analysis of generative models}
The error analysis of generative models has attracted a lot of attention in recent years, most notably the works \cite{liang2021well, tang2023minimax, tahmasebi2024sample, huang2022error, singh2018nonparametric}, where the estimation 
error of GANs were studied, as well as the more recent 
body of work on the error analysis of flow 
and diffusion models \cite{albergo2025stochastic, benton2024error, huang2025convergence, marzouk2024distribution, pandey2025diffeomorphic}. The error analysis of transport models, such as those used in our work, has been done in various contexts: an abstract framework for the error analysis of generic transport models was developed in \cite{baptista2025approximation} while 
 \cite{irons2022triangular, wang2022minimax, zech2022sparse-I, zech2022sparse-II} studied triangular maps, also known as Knothe-Rosenblatt maps, and the works \cite{hutter2021minimax,divol2025optimal, chewi2024statistical, al2025error} developed the statistical analysis of optimal transport maps with 
 quadratic costs. Despite these works the broader approximation theory of transport-based generative modeling
 is far from complete and the combination of these results 
 with data-driven inverse problems, such as our work in this paper, remains largely open. 

\paragraph{Perturbation theory of BIPs}
The perturbation theory of BIPs is a cornerstone of our 
theoretical analysis of data-driven BIPs since such an 
analysis is inevitable when learned priors or likelihoods 
are involved. Historically, this perturbation analysis 
appeared in the context of the well-posedness of 
BIPs with early works focusing on Gaussian priors, see \cite{stuart2010inverse, dashti2015bayesian} and references within, followed by extensions to non-Gaussian priors 
\cite{hosseini2017well-I, hosseini2017well-II, sullivan2017well, latz2020well}. While these works 
primarily considered the perturbation of BIPs with respect to likelihood functions 
the paper \cite{sprungk2020local} extended that analysis to 
prior perturbations with respect to various metrics including the Wasserstein distances which 
are crucial to our analysis here. However, that analysis 
required a global Lipschitz assumption on the likelihood 
which does not hold for many practical problems including least squares problems. The manuscript \cite{garbuno2023bayesian} extended 
that analysis to bespoke optimal transport metrics that could handle only locally Lipschitz 
likelihoods and included a quantitative analysis of BIPs with generative priors that forms 
the foundation of our analysis in this paper.

\subsection{Outline}\label{sec:outline}
The rest of the article is organized as follows: 
\Cref{sec:wasserstein-perturbation} outlines a
perturbation analysis for Bayesian posterior measures with respect 
to the Wasserstein-1 distance, giving technical results that we use later in the paper; 
\Cref{sec:Theory} presents our main theoretical results 
and in particular the quantitative error bounds for 
learned priors and subsequent posteriors; 
\Cref{sec:Numerical Results} presents our numerical experiments that investigate and extend our 
theory; 
and \Cref{sec:Conclusion} outlines our concluding remarks 
and discussions.

\section{Posterior perturbations with 
respect to the Wasserstein-1 distance}\label{sec:wasserstein-perturbation}

Below we outline general perturbations results for BIPs that 
serve as the foundation for our error analysis later on.
We begin with preliminary results from \cite{garbuno2023bayesian, sprungk2020local}  in \Cref{subsec:prelims} which we then tailor 
to the case of the Wasserstein-1 distance in \Cref{subsec:w1-pert}.
Throughout this section we mostly consider the data 
$y$ to be fixed, but we comment on the dependence of 
our bounds on the data and how it can be 
controlled for additive noise models in \Cref{rem:data-bound}

\subsection{Perturbation bounds with respect to integral probability 
metrics}\label{subsec:prelims}

Consider separable Banach spaces $\U, \Y$ along with 
the spaces of Borel probability measures on them, denoted by 
$\PP(\U)$ and respectively $\PP(\Y)$.
For a continuous and positive cost $c: \U \times \U \to \R_{\ge 0}$ 
define the space
\begin{equation*}
    \Lip_c(\U) := \left\{ \psi : \U \to \R \; \big| \;  |\psi(u) - \psi(u')| \le 
    c(u,u'), \quad \forall u,u' \in \U \right\},
\end{equation*}
and further define the statistical divergence (also 
called an integral probability metric \cite{muller1997integral})
\begin{equation*}
    \D_c: \PP(\U) \times \PP(\U) \to \R_{\ge 0} \cup \{ + \infty\}, \qquad 
    \D_c( \gamma, \gamma') 
    := \sup_{\psi \in \Lip_c(\U)} \gamma(\psi) - \gamma'(\psi).
\end{equation*}
Note that $\D_c$ is not a true metric since it may not be definite, i.e., 
$\D_c(\gamma, \gamma') = 0$ does not imply that $\gamma = \gamma'$,
though it always satisfies the triangle inequality. 

For a measure $\mu \in \PP(\U)$, continuous functions 
$f: \U \to \R_{\ge 0}$, $h: \U \times \Y \to \R_{>0}$, 
and $\ell : \U \times \U \times \Y \to \R_{\ge 0}$, and 
 a locally bounded function $g: \Y \to \R_{>0}$, we define the 
class 
\begin{equation*}
\begin{aligned}
        \F(f, g, h, \ell,\mu) := 
    \Big\{ 
     \Phi: \U \times \Y \to \R \; & \Big| \; 
    \Phi(\cdot ; y) \in L^1(\mu), \text{ for all } y \in \Y, \\ 
    & \text{ and }- \log f(u)  - \log g (y) \le \Phi(u; y) \le  - \log h(u, y), \\ 
    & \text{ and } | \Phi(u; y) - \Phi(u';y) | \le \ell(u,v, y) \| u - u' \|_\U
    \Big\}.
\end{aligned}
\end{equation*}

With this setup we now recall the following result
from \cite{garbuno2023bayesian}
concerning the perturbation of posterior measures with respect to the 
priors: 

\begin{proposition}\protect\label{prop:post-prior-pert}
Consider priors $\mu, \widehat{\mu} \in \PP(\U)$, 
a likelihood $\Phi \in \F(f, g, h, \ell, \mu)$, 
and a continuous and positive cost $c: \U \times \U \to \R_{\ge 0}$. 
Further suppose that 
$f, f c(\cdot, 0) \in L^1(\mu)$ and $h(\cdot, y) \in L^1(\mu) \cap L^1(\widehat{\mu})$ for all $y \in \Y$. Then it holds that 
\begin{equation*}
    \D_c( \nu, \widehat{\nu} ) 
    \le g^2(y) \left[ 
    \frac{ \| f \|_{L^1(\mu)} + \|f c(\cdot, y) \|_{L^1(\mu)}}
    {\| h(\cdot, y) \|_{L^1(\mu)} \| h(\cdot, y) \|_{L^1(\widehat{\mu})}} \right] \D_{c_y}(\mu, \widehat{\mu}),
\end{equation*}
where $\nu, \widehat{\nu}$ are given by \eqref{bayes-rule} and 
\eqref{eq:bayes-rule-approx} respectively and 
$c_y: \U \times \U \to \R_{\ge 0}$ is a new cost given by the 
expression 
\begin{equation*}
    c_y( u, u') := \left[ 1 \vee c(u, 0) \vee c(u', 0) \right]
    \cdot \left[ f(u) \vee f(u') \right] 
    \cdot \left[ 1 \vee \ell(u, u', y) \right] 
    \cdot c(u, u'), \qquad \forall  u,u' \in \U.
\end{equation*}
\end{proposition}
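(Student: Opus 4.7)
The plan is to expand the divergence $\D_c(\nu,\widehat\nu)$ using its variational definition, rewrite each integral against the priors via the Radon--Nikodym derivatives from \eqref{bayes-rule} and \eqref{eq:bayes-rule-approx}, and then package the resulting integrands as test functions living in $\Lip_{c_y}$ (up to an explicit scalar) so that $\D_{c_y}(\mu,\widehat\mu)$ can be invoked. The factors defining $c_y$ are essentially telling us which quantities will appear from a product/chain Lipschitz estimate, so the whole argument is engineered to produce exactly that cost.

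More concretely, I would first fix $\psi\in\Lip_c(\U)$ with, say, $\psi(0)=0$ (the supremum defining $\D_c$ is unchanged by adding constants), so that $|\psi(u)|\le c(u,0)$. Then I would write
\[
\nu(\psi)-\widehat\nu(\psi)
= \frac{1}{Z(y)}\,\mu(\psi e^{-\Phi(\cdot;y)}) - \frac{1}{\widehat Z(y)}\,\widehat\mu(\psi e^{-\Phi(\cdot;y)}),
\]
and add and subtract $\frac{1}{Z(y)}\widehat\mu(\psi e^{-\Phi(\cdot;y)})$ to split the difference into
\[
\underbrace{\frac{1}{Z(y)}\bigl[\mu(\psi e^{-\Phi})-\widehat\mu(\psi e^{-\Phi})\bigr]}_{\text{(I)}}
+ \underbrace{\widehat\mu(\psi e^{-\Phi})\Bigl(\frac{1}{Z(y)}-\frac{1}{\widehat Z(y)}\Bigr)}_{\text{(II)}}.
\]
For (II), write $\frac{1}{Z}-\frac{1}{\widehat Z}=\frac{\widehat Z-Z}{Z\widehat Z}$, and note $\widehat Z-Z=\widehat\mu(e^{-\Phi})-\mu(e^{-\Phi})$, so (II) too reduces to a $\mu$ vs.\ $\widehat\mu$ discrepancy applied to $e^{-\Phi}$.

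The main technical step is to verify that the two candidate test functions, namely $u\mapsto\psi(u)e^{-\Phi(u;y)}$ and $u\mapsto e^{-\Phi(u;y)}$, both satisfy a Lipschitz bound with respect to $c_y$. For any $u,u'$, the inequality $\Phi\ge-\log f-\log g$ gives $e^{-\Phi(u;y)}\le g(y)f(u)$, and from $|\Phi(u;y)-\Phi(u';y)|\le\ell(u,u',y)\|u-u'\|_\U$ together with $|e^{-a}-e^{-b}|\le(e^{-a}\vee e^{-b})|a-b|$ one obtains
\[
|e^{-\Phi(u;y)}-e^{-\Phi(u';y)}|\le g(y)\bigl[f(u)\vee f(u')\bigr]\,\ell(u,u',y)\,\|u-u'\|_\U .
\]
Combining this with the product rule $|\psi(u)e^{-\Phi(u;y)}-\psi(u')e^{-\Phi(u';y)}|\le|\psi(u)-\psi(u')|e^{-\Phi(u;y)}+|\psi(u')||e^{-\Phi(u;y)}-e^{-\Phi(u';y)}|$, and using $|\psi(u')|\le c(u',0)$ and $|\psi(u)-\psi(u')|\le c(u,u')$, one sees that both test functions lie in $g(y)\cdot\Lip_{c_y}(\U)$ up to the combinatorial maxima that are exactly the factors $1\vee c(u,0)\vee c(u',0)$, $f(u)\vee f(u')$ and $1\vee\ell(u,u',y)$ appearing in the definition of $c_y$. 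Hence each of the two prior discrepancies in (I) and (II) is bounded by $g(y)\D_{c_y}(\mu,\widehat\mu)$.

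Finally, I would estimate the prefactors. For the normalizing constants, the lower bound $\Phi(u;y)\le-\log h(u,y)$ yields $e^{-\Phi(\cdot;y)}\ge h(\cdot,y)$, hence $Z(y)\ge\|h(\cdot,y)\|_{L^1(\mu)}$ and $\widehat Z(y)\ge\|h(\cdot,y)\|_{L^1(\widehat\mu)}$. For the numerator of (II), $|\widehat\mu(\psi e^{-\Phi})|\le g(y)\|f c(\cdot,y)\|_{L^1(\mu)}$ after a change of measure (or equivalently controlled directly via the envelope $f$); for (I), the bound $\|f\|_{L^1(\mu)}$ appears from bounding $\psi e^{-\Phi}$ pointwise by $c(u,0)\,g(y)\,f(u)$. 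Collecting these into the two terms and using the triangle inequality for $\D_{c_y}$ produces exactly the claimed prefactor $g^2(y)\bigl[\|f\|_{L^1(\mu)}+\|fc(\cdot,y)\|_{L^1(\mu)}\bigr]/\bigl[\|h(\cdot,y)\|_{L^1(\mu)}\|h(\cdot,y)\|_{L^1(\widehat\mu)}\bigr]$.

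The main obstacle, and the part requiring the most care, is the bookkeeping in the third paragraph: namely, showing that the product $\psi\,e^{-\Phi}$ lands in a constant multiple of $\Lip_{c_y}$ with the constant matching the one stated in the proposition, and checking that the slightly different-looking constant $\|f c(\cdot,y)\|_{L^1(\mu)}$ (rather than a $\widehat\mu$ norm) can indeed be obtained from (II) by an appropriate rearrangement or a second application of the product-Lipschitz argument. Everything else is essentially algebraic manipulation built on top of the structural bounds enforced by membership in $\F(f,g,h,\ell,\mu)$.
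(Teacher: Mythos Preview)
The paper does not actually prove this proposition: it is stated as a recollection of a result from \cite{garbuno2023bayesian}, so there is no in-paper proof to compare against. Your outline is the standard route for this kind of stability estimate and is essentially correct; the Lipschitz bookkeeping for $e^{-\Phi}$ and $\psi e^{-\Phi}$ that you sketch is exactly what produces the factors in $c_y$.

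One concrete issue in your write-up is the direction of the add--and--subtract. With your choice (inserting $\tfrac{1}{Z}\widehat\mu(\psi e^{-\Phi})$), term (II) carries the prefactor $\widehat\mu(\psi e^{-\Phi})$, which naturally bounds in terms of $\|f c(\cdot,0)\|_{L^1(\widehat\mu)}$ rather than the $L^1(\mu)$ norm appearing in the statement. You flagged this yourself, but the fix is simply to split the other way: insert $\tfrac{1}{\widehat Z}\mu(\psi e^{-\Phi})$ instead, so that
\[
\nu(\psi)-\widehat\nu(\psi)
= \frac{1}{\widehat Z}\bigl[\mu(\psi e^{-\Phi})-\widehat\mu(\psi e^{-\Phi})\bigr]
+ \nu(\psi)\,\frac{\widehat Z-Z}{\widehat Z}.
\]
Now $|\nu(\psi)|\le Z^{-1}\mu(|\psi|e^{-\Phi})\le g(y)Z^{-1}\|fc(\cdot,0)\|_{L^1(\mu)}$ gives the $\mu$-integral directly, and for the first term the apparently missing factor $g(y)\|f\|_{L^1(\mu)}/Z$ is obtained from $Z=\mu(e^{-\Phi})\le g(y)\|f\|_{L^1(\mu)}$, which is $\ge 1$ and can therefore be inserted harmlessly. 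With this adjustment the constants fall out exactly as stated; no ``second application of the product-Lipschitz argument'' is needed.
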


The above proposition is central to our exposition in the rest of 
the paper as it allows us to control the error between 
two posterior measures in terms of the underlying priors although 
we note that the divergences on both sides of the inequality are 
different and the cost $c_y$ often leads to a stronger metric 
than the original one chosen to measure the posterior perturbations. 
We also emphasize that the cost $c_y$ depends on the observed data 
$y$ through the function $\ell$  which encodes the local Lipschitz constant of 
the likelihood $\Phi$.

\subsection{Perturbation bounds with respect to Wasserstein distances}\label{subsec:w1-pert}

We now tailor the result of \eqref{prop:post-prior-pert} to the 
setting of Wasserstein distances. 
For $p \in [1, +\infty)$
let us write $\PP^p(\U) := \{ \gamma \in \PP(\U) \; \mid \; 
\gamma( \| \cdot \|^p) < + \infty \}$, i.e., 
probability measures with bounded $p$ moments. Then 
we Recall the definition of 
the Wasserstein-$p$ distances from \cite{villani2009optimal}:
\begin{equation}\label{W_p definition}
\W_p(\gamma, \gamma') := \left(  \inf_{\pi \in \Pi( \gamma, \gamma')}
 \int_{\U \times \U} \| u - u' \|_\U^p \; \pi( \dd u, \dd u') \right)^{1/p}, \qquad \forall \gamma, \gamma' \in \PP^p(\U),
\end{equation}
where $\Pi(\gamma, \gamma') \subset \PP(\U \times \U)$
denotes the space of couplings of $\gamma$ and $\gamma'$, i.e., 
measures defined on the product space $\U \times \U$ 
whose first marginal coincides with $\gamma$ and their second marginals 
coincide with $\gamma'$: 
\begin{equation*}
    \Pi(\gamma, \gamma') := \left\{ 
        \pi \in \PP(\U \times \U) \; \Big| \; 
        \int_\U  \pi( \cdot , \dd u) = \gamma, \quad 
        \int_\U  \pi( \dd u, \cdot) = \gamma'
    \right\}.
\end{equation*}
By convention we set $\W_p(\gamma, \gamma') = +\infty$ if either 
measure does not belong to $\PP^p(\U)$. By the celebrated 
Kantorovich duality \cite[Thm.~5.10 and Case.~5.16]{villani2009optimal}
for the particular case $p=1$ we have the equivalent expression 
\begin{equation}\label{eq:w1-duality}
    \W_1(\gamma, \gamma') = 
    \sup_{\psi \in \Lip(\U)} \gamma(\psi) - \gamma'(\psi),
\end{equation}
where  we wrote $\Lip(\U) := \{ \psi: \U \to \R \mid |\psi(u) - \psi(u')| 
\le \| u - u'\|_\U \}$. In other words, $\W_1 = \D_c$ for the cost 
$c(u, u') = \| u - u' \|_\U$. Indeed, a straightforward 
calculation shows that for the cost functions 
$c(u,u') = \| u - u' \|_\U^p$ we have the inequality
\begin{equation}\label{eq:Dc-Wp-bound}
    \D_c(\gamma, \gamma') \le \W_p^p(\gamma, \gamma'),
\end{equation}
showing that the $\D_c$ divergences are weaker than the Wasserstein 
distances. More broadly, for any cost $c$ we also have the inequality 
\begin{equation}\label{eq:Dc-coupling-bound}
\D_c(\gamma, \gamma') \le \inf_{\pi \in \Pi(\gamma, \gamma')} \int_{\U \times \U} c(u, u') \pi( \dd u, \dd u') 
=: \W_c(\gamma, \gamma').
\end{equation}
We are now ready to present our first main theoretical results 
which allows us to control the $\W_1$ distance between 
posteriors in terms of the $\W_2$ distance between the priors. 
Our result extends those given in \cite[Sec.~5]{sprungk2020local} and 
 \cite[Sec.~4.3]{garbuno2023bayesian} to the case of likelihoods 
 that are not globally Lipschitz.

\begin{theorem}\label{thm:main-1-wasserstein-stability}
    Consider priors $\mu, \widehat{\mu} \in \PP^2( \U)$ 
    and a likelihood $\Phi \in \F(f, g, h, \ell, \mu)$. 
    Further suppose $(1 + \| \cdot \|_\U) f  \in L^1(\mu)$ 
    and $h(\cdot, y) \in L^1(\mu) \cap L^1(\widehat{\mu})$ for all 
    $y \in \Y$. Then it holds that 
    \begin{equation}\label{eq:wasserstein-posterior-prior-bound}
        \W_1(\nu, \widehat{\nu}) \le \Cstab(y) \W_2( \mu, \widehat{\mu}),
    \end{equation}
    where the constant $\Cstab(y)$ has the explicit form
    \begin{equation}\label{eq:Cstab-full-def}
    \begin{aligned}
                \Cstab(y) =  g^2(y) & \left[ 
    \frac{  \| (1 + \| \cdot \|_\U) f \|_{L^1(\mu)} }
    {\| h(\cdot, y) \|_{L^1(\mu)} \| h(\cdot, y) \|_{L^1(\widehat{\mu})}} \right] \\
    &\times 
    \left(\int_{\U \times \U} [1\vee \ell(u,v,y)]^2 \cdot [f(u) \vee f(v)]^2\cdot [1 \vee \|u\|_\U \vee \|v \|_\U]^2\pi^\star(\dd u,\dd v) \right)^{\frac{1}{2}}.
    \end{aligned}
    \end{equation}
    Here $\pi^\star$ denotes the Wasserstein-2 optimal coupling between $\mu$
    and $\widehat{\mu}$.
\end{theorem}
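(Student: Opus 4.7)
The plan is to specialize Proposition~\ref{prop:post-prior-pert} to the cost $c(u, u') = \|u - u'\|_\U$. By Kantorovich duality~\eqref{eq:w1-duality}, this choice identifies $\D_c$ with $\W_1$, so the left-hand side of the proposition's bound becomes exactly $\W_1(\nu, \widehat{\nu})$, which is what we want to control. With this cost $c(u, 0) = \|u\|_\U$, and the numerator of the scalar prefactor collapses to $\|f\|_{L^1(\mu)} + \|f \cdot \|\cdot\|_\U \|_{L^1(\mu)} = \|(1 + \|\cdot\|_\U) f\|_{L^1(\mu)}$, matching the form of $\Cstab(y)$ in~\eqref{eq:Cstab-full-def}. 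The hypothesis $(1 + \|\cdot\|_\U)f \in L^1(\mu)$ in the theorem is precisely what is needed to invoke Proposition~\ref{prop:post-prior-pert} with this cost.

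The crux is then to estimate $\D_{c_y}(\mu, \widehat{\mu})$ in terms of $\W_2(\mu, \widehat{\mu})$. Under the chosen $c$, the auxiliary cost becomes
\[
c_y(u, u') = [1 \vee \|u\|_\U \vee \|u'\|_\U] \cdot [f(u) \vee f(u')] \cdot [1 \vee \ell(u,u',y)] \cdot \|u - u'\|_\U.
\]
Using inequality~\eqref{eq:Dc-coupling-bound} with $\pi^\star$ the $\W_2$-optimal coupling between $\mu$ and $\widehat{\mu}$, I bound $\D_{c_y}(\mu, \widehat{\mu}) \le \int c_y(u, u') \, \pi^\star(\dd u, \dd u')$. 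The main step is then an application of the Cauchy-Schwarz inequality in which the factor $\|u - u'\|_\U$ is placed on one side and all remaining weight factors on the other:
\[
\int c_y \, \pi^\star(\dd u, \dd u') \le \left( \int [1 \vee \|u\|_\U \vee \|v\|_\U]^2 [f(u) \vee f(v)]^2 [1 \vee \ell(u,v,y)]^2 \pi^\star(\dd u, \dd v) \right)^{1/2} \left( \int \|u - v\|_\U^2 \, \pi^\star(\dd u, \dd v) \right)^{1/2}.
\]
By definition of $\pi^\star$ the second factor equals $\W_2(\mu, \widehat{\mu})$, while the first factor is exactly the integral appearing in~\eqref{eq:Cstab-full-def}. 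Combining this with the prefactor supplied by Proposition~\ref{prop:post-prior-pert} yields the claimed bound.

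The main obstacle, and the reason we cannot simply mimic the analysis of \cite{sprungk2020local}, is that $\Phi$ is only locally Lipschitz, so $\ell(u, u', y)$ is generally unbounded and cannot be pulled out of the integral as a uniform constant. The Cauchy-Schwarz splitting is what accommodates this: it trades a uniform bound on the local Lipschitz constant for an $L^2(\pi^\star)$-integrability condition on the product $f \cdot \ell \cdot (1 \vee \|\cdot\|_\U)$. This step is precisely what forces $\W_2$ rather than $\W_1$ to appear on the right-hand side, and it is why the theorem requires the second-moment hypothesis $\mu, \widehat{\mu} \in \PP^2(\U)$.
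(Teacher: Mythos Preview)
Your proposal is correct and follows essentially the same approach as the paper: apply Proposition~\ref{prop:post-prior-pert} with $c(u,u')=\|u-u'\|_\U$, pass from $\D_{c_y}$ to an integral against the $\W_2$-optimal coupling via~\eqref{eq:Dc-coupling-bound}, and then split off $\|u-v\|_\U$ by Cauchy--Schwarz. The only cosmetic difference is that the paper first writes the Cauchy--Schwarz bound for a generic coupling and then specializes to $\pi^\star$, whereas you insert $\pi^\star$ directly; the arguments are otherwise identical.
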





\begin{proof}
Since $\Phi\in \mathcal{F}(f,g,h,l, \mu)$, and $f, f \cdot \| \cdot \| \in L^1(\mu)$, by \Cref{prop:post-prior-pert} along with \eqref{eq:w1-duality}, we obtain the bound
\begin{equation}\label{prop:post-prior-W1}
    \W_1(\nu,\widehat{\nu}) \leq \frac{g^2(y)[\|f\|_{L^1(\mu)}+\|fc(\cdot,0)\|_{L^1(\mu)}}{\|h(\cdot ,y)\|_{L^1(\mu)}\|h(\cdot ,y)\|_{L^1(\widehat{\mu})}} \mathcal{D}_{c_y}(\mu,\widehat{\mu}),
\end{equation}
where $c_y$ is defined as:
\begin{align*}
c_y(u,v) = \left[ 1 \vee \| u \|_\U \vee \| u'\|_\U \right]
    \cdot \left[ f(u) \vee f(u') \right] 
    \cdot \left[ 1 \vee \ell(u, u', y) \right] 
    \cdot \|u-v \|_\U.
\end{align*}
We can simplify \eqref{prop:post-prior-W1} by observing 
$\| f \|_{L^1(\mu)} + \| f  \| \cdot \|_\U \|_{L^1(\mu)} = \| (1 + \| \cdot \|_\U)  f \|_{L^1(\mu)}$ since $f$ is positive. 
Moreover, an application of \eqref{eq:Dc-coupling-bound} yields 
$\D_{c_y}(\mu, \widehat{\mu}) \le \W_{c_y} (\mu, \widehat{\mu}).$
We now bound the right hand side of this expression by the Wasserstein-2 distance 
between $\mu$ and $\widehat{\mu}$.
By the Cauchy-Schwarz inequality, 
for any $\pi \in \prod(\mu,\widehat{\mu})$ we have the bound
\begin{align*}
& \int_{\U \times \U}c_y(u,v)\pi(\dd u,\dd v) \\
& \qquad \leq  \left(\int_{\U \times \U} [1\vee \ell(u,v,y)]^2 \cdot [f(u) \vee f(v)]^2\cdot [1 \vee \|u\|_\U \vee \|v \|_\U]^2\pi(\dd u,\dd v) \right)^{\frac{1}{2}} \\
& \qquad \qquad \times \left(\int_{\U \times \U}|u-v|^{2}\pi(\dd u,\dd v)\right)^{\frac{1}{2}}.
\end{align*}
Let $\pi^\star(u,v) = \argmin_{\pi \in \prod(\mu,\widehat{\mu})}\int_{\U \times \U}|u-v|^{2}\pi(\dd u,\dd v)$ be the optimal Wasserstein-2 coupling and apply the above bound 
to get 
\begin{align*}
    \W_{c_y}(\mu, \widehat{\mu}) & \le  \int_{\U \times \U} c_y(u, v) \pi^\star(\dd u,\dd v) \\
    &\leq
         \left(\int_{\U \times \U} [1\vee \ell(u,v,y)]^2 \cdot [f(u) \vee f(v)]^2\cdot [1 \vee \|u\|_\U \vee \|v \|_\U]^2\pi^\star(\dd u,\dd v) \right)^{\frac{1}{2}} \W_2(\mu, \widehat{\mu}). 
\end{align*}
Combining this bound with  \eqref{prop:post-prior-W1} completes the proof.
{}
\end{proof}
Let us now outline a corollary to the above result for the case of 
additive Gaussian noise models that are commonly used in practice.
\begin{corollary}\label{coro:likelihood in F-class}
Consider priors $\mu, \widehat{\mu}$ and the quadratic likelihood
potential
\begin{equation}\label{eq:additive-likelihood}
    \Phi(u; y) = \frac{1}{2\sigma^2} \| F(u) - y \|_\Y^2,
\end{equation}
arising from the model $y = F(u) + \xi$, $\xi \sim N(0, \sigma^2 I)$,
where $F: \U \to \Y$ is locally Lipschitz such that
\begin{equation*}
    \| F(u) - F(u') \|_\Y \le \ell_F(u, u') \| u - u' \|_\U, \quad \forall u,u' \in \U,
\end{equation*}
for some function $\ell_F: \U \times \U \to \R_{\ge 0}$. Further suppose 
$\exp( - \frac{1}{\sigma^2} \| F( \cdot )\|_\Y^2 ) \in L^1(\mu) \cap L^1(\widehat{\mu})$. 
Then \eqref{eq:wasserstein-posterior-prior-bound} holds with 
\begin{equation}\label{eq:cstab-additive-noise}
\begin{aligned}
    &\Cstab(y) = \left[ 
    \frac{\exp\left( \frac{2}{\sigma^2}\| y\|_\Y^2 \right) \big(1 + \mu(\|\cdot\|_\U)\big) }
    {\mu \left( \exp(-\frac{1}{\sigma^2}\|F(\cdot)\|^2_\Y) \right)  \widehat{\mu} \left(\exp(-\frac{1}{\sigma^2}\|F(\cdot)\|_\Y^2 ) \right) } \right] \\
    & \qquad \qquad \times 
    \Bigg(\int_{\U \times \U} 
    \left[1\vee \left(\frac{1}{2\sigma^2} \ell_F(u, v)(\|F(u)\|_\Y+\|F(v)\|_\Y+2\|y\|_\Y )\right)^2
    \right] \\
    &\qquad \qquad \qquad \cdot \Big[1 \vee \|u\|_\U \vee \|v \|_\U \Big]^2\pi^\star(\dd u,\dd v) \Bigg)^{\frac{1}{2}}.
\end{aligned}
\end{equation}
\end{corollary}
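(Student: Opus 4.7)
The plan is to instantiate \Cref{thm:main-1-wasserstein-stability} by exhibiting a quadruple $(f, g, h, \ell)$ that places the quadratic likelihood \eqref{eq:additive-likelihood} in the class $\F(f, g, h, \ell, \mu)$, and then read off \eqref{eq:cstab-additive-noise} directly from \eqref{eq:Cstab-full-def}.

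Since $\Phi(u;y) \ge 0$, the trivial lower-bound choice $f \equiv 1$ and $g \equiv 1$ suffices. For the upper bound, the elementary inequality $\|F(u) - y\|_\Y^2 \le 2\|F(u)\|_\Y^2 + 2\|y\|_\Y^2$ yields $\Phi(u; y) \le \tfrac{1}{\sigma^2}\|F(u)\|_\Y^2 + \tfrac{1}{\sigma^2}\|y\|_\Y^2$, so I would set
\begin{equation*}
    h(u, y) = \exp\!\Big(-\tfrac{1}{\sigma^2}\|F(u)\|_\Y^2 - \tfrac{1}{\sigma^2}\|y\|_\Y^2\Big).
\end{equation*}
The hypothesis on $\exp(-\tfrac{1}{\sigma^2}\|F(\cdot)\|_\Y^2)$ then gives $h(\cdot, y) \in L^1(\mu) \cap L^1(\widehat{\mu})$, while its $y$-dependent exponential factors out of both $L^1$ norms in the denominator of \eqref{eq:Cstab-full-def}, producing the prefactor $\exp(\tfrac{2}{\sigma^2}\|y\|_\Y^2)$ in \eqref{eq:cstab-additive-noise}. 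The remaining numerator simplifies to $\|(1 + \|\cdot\|_\U) f\|_{L^1(\mu)} = 1 + \mu(\|\cdot\|_\U)$, which is finite because $\mu \in \PP^2(\U) \subset \PP^1(\U)$.

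The main computation is finding a usable $\ell$. Writing $a = \|F(u) - y\|_\Y$ and $b = \|F(v) - y\|_\Y$, the factorization $a^2 - b^2 = (a - b)(a + b)$ combined with the reverse triangle inequality $|a - b| \le \|F(u) - F(v)\|_\Y \le \ell_F(u,v)\|u - v\|_\U$ and the triangle inequality $a + b \le \|F(u)\|_\Y + \|F(v)\|_\Y + 2\|y\|_\Y$ yields
\begin{equation*}
    |\Phi(u; y) - \Phi(v; y)| \le \tfrac{1}{2\sigma^2}\ell_F(u, v)\big(\|F(u)\|_\Y + \|F(v)\|_\Y + 2\|y\|_\Y\big)\|u - v\|_\U,
\end{equation*}
so the natural choice is $\ell(u, v, y) = \tfrac{1}{2\sigma^2}\ell_F(u, v)(\|F(u)\|_\Y + \|F(v)\|_\Y + 2\|y\|_\Y)$.

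Substituting this $\ell$ together with $f \equiv 1$ (so that $[f(u) \vee f(v)]^2 = 1$) into the integrand of \eqref{eq:Cstab-full-def} reproduces exactly the integrand of \eqref{eq:cstab-additive-noise}, and invoking \Cref{thm:main-1-wasserstein-stability} concludes the proof. I do not anticipate any serious obstacle beyond the Lipschitz calculation itself: the remaining hypotheses of the theorem all follow immediately from those of the corollary, and the finiteness of the integral against $\pi^\star$ need not be checked separately since \eqref{eq:wasserstein-posterior-prior-bound} is trivial when that integral diverges.
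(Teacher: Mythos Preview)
Your proposal is correct and follows essentially the same route as the paper: choose $f=g\equiv 1$ from nonnegativity of $\Phi$, take $h(u,y)=\exp(-\tfrac{1}{\sigma^2}\|F(u)\|_\Y^2-\tfrac{1}{\sigma^2}\|y\|_\Y^2)$ via the elementary quadratic bound, derive $\ell$ from the factorization $a^2-b^2=(a-b)(a+b)$ together with the (reverse) triangle inequality, and then substitute into \eqref{eq:Cstab-full-def}. The only additions beyond the paper's argument are your explicit remarks that $\mu\in\PP^2(\U)$ ensures $1+\mu(\|\cdot\|_\U)<\infty$ and that the bound is vacuously true when the $\pi^\star$-integral diverges, both of which are fine.
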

\begin{proof}
Since $\Phi$ is positive we can take $f = g =1$. Moreover, by 
the triangle inequality 
    \begin{align*}
        \frac{1}{2\sigma^2} \| F(u) - y \|_\Y^2 \leq \frac{1}{\sigma^2} (\|F(u)\|_\Y^2 + \|y\|_\Y^2) = -\log h(u,y).
    \end{align*}
    To show $\Phi$ is locally Lipschitz, consider
    \begin{align*}
       2 \sigma^2  \|\Phi(u;y) - \Phi(u', y)\| &=  \|F(u)-y\|_\Y^2 - \|F(u')-y\|_\Y^2  \\
        &= (\|F(u)-y\|_\Y - \|F(u')-y\|_\Y)(\|F(u)-y\|_\Y+\|F(u')-y\|_\Y) \\
        &\leq  \|F(u) - F(u')\|_\Y(\|F(u)-y\|_\Y+\|F(u')-y\|_\Y) \\
        &\text{(By reverse triangle inequality)}\\
        &\leq  \ell_F(u, u') \| u - u' \|_\U (\|F(u)\|_\Y+\|F(u')\|_\Y+2\|y\|_\Y)\\
        &\text{(By triangle inequality)}
    \end{align*}
    Since $F$ is locally Lipschitz 
    then $F$ is also locally bounded which implies that 
    $\Phi$ is locally Lipschitz with 
        $\ell(u, u', y) = \frac{1}{2\sigma^2} \ell_F(u, u') (\|F(u)\|_\Y+\|F(u')\|_\Y+2\|y\|_\Y).$
    An application of \Cref{thm:main-1-wasserstein-stability} and 
    simplifying the expression for the constants yields the desired result.
\end{proof}

\begin{remark}\label{rem:data-bound}
We note that the constant $\Cstab$ in \eqref{eq:cstab-additive-noise} is dependent on the data $y$. It is helpful to try to simplify this expression 
to further highlight the dependence of the stability constant of the 
data as this is often crucial in the study of the consistency and 
sensitivity of BIPs to noise in the data \cite{owhadi2015brittleness}. To this end, assuming suitable integrability conditions we
can write 
\begin{align*}
    &\int_{\U \times \U} \left[1\vee \left(\frac{1}{2\sigma^2} \ell_F(u, v) (\|F(u)\|_\Y+\|F(v)\|_\Y+2\|y\|_\Y)\right)^2 \right] \cdot \left[1 \vee \|u\|_\U \vee \|v \|_\U \right]^2\pi^\star(\dd u,\dd v) \\
    &\lesssim \int_{\U \times \U} \left[1 + \frac{1}{\sigma^4} \ell_F^2(u, v) (\|F(u)\|_\Y+\|F(v)\|_\Y+2\|y\|_\Y)^2 \right] \cdot 
    \left[1 \vee \|u\|_\U^2 \vee \|v \|_\U^2 \right]\pi^\star(\dd u,\dd v) \\
    &\lesssim \int_{\U \times \U} \left[1 + \frac{1}{\sigma^4} 
    \ell_F^2(u, v) (\|F(u)\|_\Y+\|F(v)\|_\Y)^2+\frac{1}{\sigma^4} \ell^2_F(u, v)^2\|y\|_\Y^2 \right] \\ 
    & \qquad \qquad \cdot \left[1 + \|u\|_\U^2 + \|v \|_\U^2 \right]\pi^\star(\dd u,\dd v)\\
    &\lesssim \int_{\U \times \U} \left[1 + \frac{1}{\sigma^4} 
    \ell_F^2(u, v) (\|F(u)\|_\Y^2+\|F(v)\|_\Y^2) \right] \cdot \left[1 + \|u\|_\U^2 + \|v \|_\U^2 \right]\pi^\star(\dd u,\dd v)\\
    & \qquad \qquad +\left[\frac{1}{\sigma^4} \|y\|_\Y^2 \right]\int_{\U \times \U} \ell_F(u, v)^2\cdot [1 + \|u\|_\U^2 + \|v \|_\U^2]\pi^\star(\dd u,\dd v) \\
    & =: \left(1 + \frac{1}{\sigma^4}\right) 
    \left(\alpha + \alpha' \| y\|_\Y^2\right),
\end{align*}
where we repeatedly used the identity $(a+ b)^2 \le 2( a^2 + b^2)$ and 
$\lesssim$ contains a universal constant that is independent of $\sigma$.
We note that $\alpha, \beta >0$ are constants that depend only on the 
growth rate of $F$ and $\ell_F$ and effectively translate to moment 
conditions on $\pi^\star$. To this end, we can explicitly 
characterize the dependence of $\Cstab$ in terms of $y$ as
\begin{equation*}
    \Cstab(y) \lesssim  \alpha'' \left(1 + \frac{1}{\sigma^2} \right) \exp \left( \frac{2}{\sigma^2} \| y \|_\Y^2 \right) ( 1+ \| y\|_\Y ), 
\end{equation*}
where once again $\alpha''>0$ depends only on the moments of $\pi^\star$. Let us now write $\upsilon \in \PP(\Y)$ for the marginal distribution of $y$ under the prior, i.e., $\upsilon = F\# \mu \ast N(0, \sigma^2 I)$. 
Then we have the bound 
\begin{equation*}
\EE_{y \sim \upsilon} \W_1(\nu, \widehat{\nu})
\lesssim  \alpha'' \left(1 + \frac{1}{\sigma^2} \right) 
\upsilon\left( ( 1+ \| y\|_\Y ) \exp \left( \frac{2}{\sigma^2} \| y \|_\Y^2 \right)  \right) \W_2(\mu, \widehat{\mu}).
\end{equation*}
Thus the average $\W_1$ distance between the posteriors is controlled 
by the prior $\W_2$ distance provided that the marginal $\upsilon$ 
has sufficiently light tails and the prior has sufficient 
bounded moments so that $\alpha'' < +\infty$.
\end{remark}


\section{Error analysis of BIPs with generative priors}\label{sec:Theory}
In this section we build on the perturbation bounds in \Cref{sec:wasserstein-perturbation} to give quantitative bounds for posteriors that 
arise from generative priors that can be modeled as  transport maps. 
We begin by controlling the errors of the generative priors in \Cref{sec:prior-bounds} followed by the extension to error bounds for the posteriors in \Cref{sec:posterior-bounds} that constitute our main results. We note that throughout this section we assume that the data 
$y$ is fixed and only consider the errors due to the 
prior generative models. At various points we provide further remarks 
on how one could extend our results to contain perturbations 
of the data $y$.

\subsection{Prior bounds}\label{sec:prior-bounds}

We begin by deriving error bounds for the prior measures, i.e., 
a bound on $\W_2(\mu, \widehat{\mu})$ where take $\widehat{\mu}$
to be a generative prior and $\mu$ to be the ground truth prior. 
To this end, we consider the model 
\begin{equation}\label{eq:prior-generative-model-M-N}
    \widehat{\mu}^{N,M} := \widehat{T}^{N,M} \# \eta, \qquad 
    \widehat{T}^{N,M} 
    := \argmin_{T \in \widehat{\T}} \W_2( T \# \eta^M, \mu^N),
\end{equation}
where $\eta^M, \mu^N$ are empirical approximations to 
$\eta$ and $\mu$ with $M$ and $N$ samples respectively 
and $\widehat{\T}$ is an approximation class of maps $T: \U \to \U$
that models our generative model class, such as neural nets of a 
certain size, polynomials, or  reproducing kernel Hilbert spaces 
(RKHSs). Our goal in this subsection is to bound $\W_2(\mu, \widehat{\mu}^{M,N})$.

 To simplify our analysis, we first consider the 
slightly simpler problem 
\begin{equation}\label{eq:prior-generative-model-N}
    \widehat{\mu}^N := \widehat{T}^N \# \eta, \qquad 
    \widehat{T}^N := \argmin_{T \in \widehat{\T}} D(T \# \eta, \mu^N),
\end{equation}
where we only use the empirical prior $\mu^N$ to train 
the generative model and use the exact reference $\eta$\footnote{Indeed practical generative models such 
as normalizing flows \cite{kobyzev2021normalizing} are trained 
this way.}. Throughout this section we assume $\widehat{T} \subset L^2(\eta)$ and further assume there exists a ground truth map 
$T^\dagger \in L^2(\eta)$ such that $\mu = T^\dagger \#  \eta$. 
Finally, we take $d \ge 4$ to simplify our statements later on.

Our first result is a technical lemma that allows us 
to decompose the error of $\widehat{\mu}^N$ into an 
approximation bias, due to the choice of $\widehat{\T}$
and a stochastic error due to finite samples. 

\begin{lemma}\label{lem:W2-oracle-inequality}
Consider \eqref{eq:prior-generative-model-N} and define 
\begin{equation}\label{eq:hatT-dagger}
    \widehat{T}^\dagger := \argmin_{T \in \widehat{\T}} 
    \| T - T^\dagger \|_{L^2(\eta)}. 
\end{equation}
Then it holds that
$\W_2(\widehat{\mu}^N, \mu)  \le \| \widehat{T}^\dagger - T^\dagger \|_{L^2(\eta)} + 2 \W_2(\mu, \mu^N).$
\end{lemma}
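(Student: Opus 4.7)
The plan is to follow a standard oracle-style decomposition: split the error between $\widehat{\mu}^N$ and $\mu$ into an approximation-theoretic piece (how well the class $\widehat{\T}$ can realize $T^\dagger$) and a statistical piece (the sampling error $\W_2(\mu,\mu^N)$), by repeatedly invoking the triangle inequality for $\W_2$ together with the defining optimality of $\widehat{T}^N$.

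First I would apply the triangle inequality to write
\begin{equation*}
  \W_2(\widehat{\mu}^N,\mu)\;\le\;\W_2(\widehat{\mu}^N,\mu^N)+\W_2(\mu^N,\mu).
\end{equation*}
The second term is already one of the two terms appearing in the statement, so the task reduces to bounding $\W_2(\widehat{\mu}^N,\mu^N)=\W_2(\widehat{T}^N\#\eta,\mu^N)$. Next I would invoke the fact that $\widehat{T}^N$ is, by definition in \eqref{eq:prior-generative-model-N}, a minimizer of $T\mapsto \W_2(T\#\eta,\mu^N)$ over $T\in\widehat{\T}$; since $\widehat{T}^\dagger\in\widehat{\T}$ by \eqref{eq:hatT-dagger}, this immediately gives
\begin{equation*}
\W_2(\widehat{T}^N\#\eta,\mu^N)\;\le\;\W_2(\widehat{T}^\dagger\#\eta,\mu^N).
\end{equation*}

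Then a second application of the triangle inequality, inserting the reference measure $\mu=T^\dagger\#\eta$, yields
\begin{equation*}
  \W_2(\widehat{T}^\dagger\#\eta,\mu^N)\;\le\;\W_2(\widehat{T}^\dagger\#\eta,T^\dagger\#\eta)+\W_2(T^\dagger\#\eta,\mu^N),
\end{equation*}
and the last term equals $\W_2(\mu,\mu^N)$ since $T^\dagger\#\eta=\mu$ by hypothesis. The remaining piece is the pushforward distance $\W_2(\widehat{T}^\dagger\#\eta,T^\dagger\#\eta)$, which is standardly controlled by using $(\widehat{T}^\dagger,T^\dagger)\#\eta\in\Pi(\widehat{T}^\dagger\#\eta,T^\dagger\#\eta)$ as an admissible (non-optimal) coupling in the definition \eqref{W_p definition}; this gives
\begin{equation*}
\W_2(\widehat{T}^\dagger\#\eta,T^\dagger\#\eta)^2\le \int_\U \|\widehat{T}^\dagger(u)-T^\dagger(u)\|_\U^2\,\eta(\dd u)=\|\widehat{T}^\dagger-T^\dagger\|_{L^2(\eta)}^2.
\end{equation*}
Combining the four displays and collecting the two copies of $\W_2(\mu,\mu^N)$ produces the claimed inequality.

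I do not anticipate any real obstacle here: every step is either a triangle inequality, the optimality of $\widehat{T}^N$, or the elementary coupling bound for pushforwards, and the hypotheses $\widehat{\T}\subset L^2(\eta)$ and $T^\dagger\in L^2(\eta)$ ensure the relevant quantities are finite. The only small care point is to remember that the $\argmin$ in \eqref{eq:prior-generative-model-N} might not be attained in a general approximation class; in that case one would replace $\widehat{T}^N$ with an $\varepsilon$-near-minimizer and let $\varepsilon\downarrow 0$, which introduces no additional constant.
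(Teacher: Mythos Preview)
Your proposal is correct and follows essentially the same approach as the paper: the same triangle-inequality decomposition, the same use of the optimality of $\widehat{T}^N$, and the same pushforward bound $\W_2(\widehat{T}^\dagger\#\eta,T^\dagger\#\eta)\le\|\widehat{T}^\dagger-T^\dagger\|_{L^2(\eta)}$. The only cosmetic difference is that the paper cites this last inequality from \cite[Thm.~3.1]{baptista2025approximation} whereas you supply the elementary coupling argument directly.
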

\begin{proof}
    We can then write the sequence of inequalities
    \begin{equation}\label{eq:oracle-bounding-chain}        
    \begin{aligned}
        \W_2(\widehat{T}^N \# \eta, \mu) 
        &\leq \W_2(\widehat{T}^N \# \eta, \mu^N)+\W_2(\mu, \mu^N) \quad \text{(By the Triangle Inequality)}\\
        &\leq \W_2(\widehat{T}^\dagger \# \eta, \mu^N)+\W_2(\mu, \mu^N) \quad \text{(By optimality of $\widehat{T}^N$)} \\
        &\leq \W_2(\widehat{T}^\dagger\# \eta, \mu)+ 2\W_2(\mu, \mu^N)\quad \text{(By the Triangle Inequality)}\\
        &=\W_2(\widehat{T}^\dagger \# \eta,T^\dagger \# \eta)+ 2\W_2(\mu, \mu^N).
    \end{aligned}
    \end{equation}
    By \cite[Thm.~3.1]{baptista2025approximation} we have, for any 
    pair of maps $T, T' \in L^2(\eta)$,
    \begin{equation}\label{eq:W2-stability}
        \W_2( T \# \eta, T' \# \eta) \le \| T - T'\|_{L^2(\eta)},
    \end{equation}
    and so the result follows by applying \eqref{eq:W2-stability}
    to $\W_2(\widehat{T}^\dagger \# \eta, T^\dagger \# \eta)$.
\end{proof}

\begin{remark}
    We note that \Cref{lem:W2-oracle-inequality} is analogous 
    to the oracle inequalities derived in \cite{liang2021well}. 
    While the proof is simple, it has wide applications 
    as it allows us to deal with the stochastic part of the error 
    with the term $\W_2(\mu, \mu^N)$ which is a well-understood object
    in the empirical analysis of optimal transport. 
    Indeed,  such decompositions can be obtained for other metrics and divergences that satisfy a stability property, i.e., 
    $\D(T \# \eta, T' \# \eta) \lesssim \| T - T' \|_{L^2(\eta)}.$ 
    However, the key to \Cref{lem:W2-oracle-inequality} is that 
    $\widehat{T}^N$ is obtained by minimizing the same divergence $\D$
    that is also used for the error analysis. 
\end{remark}

It remains for us to bound $\W_2(\mu, \mu^N)$ to complete our 
error analysis. To this end, we recall the following 
well-known result on the empirical approximation of 
Wasserstein distances. 

\begin{lemma}[{\cite[Thm.~1]{fournier2015rate}}]
\label{lem:wasserstein-empirical-approximation-rate}
Suppose $\mu \in \PP(\R^d)$ and $d > 4$  and that 
$\chi_q(\mu):= \left( \int_{\R^d} |u |^q \mu( \dd u) \right)^{1/q} < +\infty$ for some 
$q > 2$. Then there exists a constant $C>0$, depending only on $d$, 
such that for all $N \ge 1$, 
\begin{equation*}
    \EE \; \W^2_2(\mu, \mu^N) \le C \chi_q^2(\mu) \left( N^{-2/d} - N^{-(q - 2)/q} \right),
\end{equation*}
where the expectation is with respect to the empirical samples from 
$\mu$.
\end{lemma}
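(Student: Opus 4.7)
The plan is to adapt the dyadic decomposition approach of Fournier--Guillin, combining a truncation step that exploits the moment assumption with a hierarchical coupling inside a ball. First, I would split $\mu$ and $\mu^N$ according to a ball $B_R \subset \R^d$ whose radius will be optimized at the end. The tail contributions to $\EE\,\W_2^2$ from mass outside $B_R$ are controlled via the elementary inequality $|u|^2 \le R^{2-q}|u|^q$ on $B_R^c$ together with the $q$-th moment bound, giving a contribution of order $R^{2-q}\chi_q^q(\mu)$ (plus an analogous expectation for the empirical counterpart). Inside $B_R$, I would build nested dyadic partitions $\{Q_{k,i}\}_{i=1}^{2^{dk}}$ at levels $k = 0,1,\ldots,k^\star$ with cubes of diameter proportional to $R\,2^{-k}$.

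A cascading transport construction that matches empirical mass against $\mu$ across successive scales yields the schematic bound
\begin{equation*}
\W_2^2(\mu|_{B_R}, \mu^N|_{B_R}) \lesssim R^2\,2^{-2k^\star} + \sum_{k=0}^{k^\star} R^2\,2^{-2k} \sum_{i=1}^{2^{dk}} \bigl| \mu(Q_{k,i}) - \mu^N(Q_{k,i}) \bigr|.
\end{equation*}
Taking expectations, using the Bernoulli bound $\EE\bigl|\mu(Q) - \mu^N(Q)\bigr|^2 \le \mu(Q)/N$, and then invoking Cauchy--Schwarz together with $\sum_i \mu(Q_{k,i}) \le 1$, the inner level-$k$ sum is bounded by $2^{dk/2}/\sqrt{N}$. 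Substituting produces a geometric series in $k$ with ratio $2^{d/2 - 2}$. Because $d > 4$, the series is dominated by its largest term, so selecting $2^{k^\star} \asymp N^{1/d}$ balances the truncation-error term $R^2\,2^{-2k^\star}$ against the concentration term at order $R^2 N^{-2/d}$.

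Finally, I would add the tail contribution and optimize in $R$, setting $R^2 N^{-2/d} \asymp R^{2-q}\chi_q^q$. This produces the stated rate as the maximum of the dimension-limited term $\chi_q^2 N^{-2/d}$ and the moment-limited term $\chi_q^2 N^{-(q-2)/q}$. \textbf{The main obstacle} is the cross-level transport construction: iterating couplings across scales while keeping constants dimension-free requires a careful greedy redistribution of imbalances from finer to coarser cubes, and the Cauchy--Schwarz step must remain sharp across the exponentially many cubes at each level. The hypothesis $d > 4$ is precisely what forces the geometric series to concentrate on its last term, producing the characteristic $N^{-2/d}$ rate; in lower dimensions the head of the sum dominates and the method instead delivers the parametric $N^{-1/2}$ rate (with a logarithmic correction at the borderline $d = 4$).
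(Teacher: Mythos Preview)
The paper does not prove this lemma at all; it is simply quoted from \cite[Thm.~1]{fournier2015rate} (note also that the minus sign in the displayed bound is evidently a typo for a plus). Your sketch follows the Fournier--Guillin template, and the interior part---dyadic partition of $B_R$, Bernoulli variance bound, Cauchy--Schwarz over the $2^{dk}$ cubes, geometric sum dominated by its last term when $d>4$, and the choice $2^{k^\star}\asymp N^{1/d}$---is correct and is exactly what they do.

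The gap is in your last step. Balancing $R^2 N^{-2/d}$ against the tail term $R^{2-q}\chi_q^q$ gives $R=\chi_q N^{2/(qd)}$ and hence a bound of order $\chi_q^2\,N^{-2(q-2)/(qd)}$. But the exponent $\tfrac{2(q-2)}{qd}=\tfrac{2}{d}\cdot\tfrac{q-2}{q}$ is the \emph{product} of the two target exponents, each strictly less than one, so it is strictly smaller than their minimum; the rate you obtain is therefore strictly \emph{worse} than $\chi_q^2\bigl(N^{-2/d}+N^{-(q-2)/q}\bigr)$, not equal to it. A single truncation radius cannot separate the dimension-limited and moment-limited regimes into an additive bound. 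What Fournier--Guillin actually do for the non-compact case is decompose $\R^d$ into dyadic annuli $\{2^n\le|u|<2^{n+1}\}$, apply the unit-cube estimate on each annulus after rescaling (with the random number $N_n$ of samples that land there in place of $N$), and then sum over $n$ using the $q$-th moment to control the annulus weights. It is this annular summation---not an optimization over a single $R$---that yields the two-term rate in the statement.
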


\begin{remark}
Note that 
\cite[Thm.~1]{fournier2015rate} covers the case 
 $d < 4$ but this mostly leads to unwieldy expressions for us that have little bearing on the main implications of our theory since $d >4$ is reasonable for most practical inverse problems. In fact, 
henceforth we assume that $q > 2d/(d-2)$ so that the term 
$N^{-(q - 2)/q}$ can be dropped and we have the simpler rate 
$N^{-2/d}$.
\end{remark}

 We are now ready to give our first error bound for 
generative priors.

\begin{proposition}\protect\label{prop:prior-error-bound-N}
Suppose $d> 4$, $\mu \in \PP(\R^d)$, and $\chi_q(\mu) < +\infty$ for some $q > 2d/(d-2)$. Then there exists a constant $C> 0$, independent of $N$,  such that for any $\epsilon >0$ it holds with 
probability $1 - C \chi_q(\mu) N^{-1/d}/\epsilon$ that
\begin{equation}\label{eq:prob-prior-bound}
    \W_2(\mu, \widehat{\mu}^N) \le \| \widehat{T}^\dagger - T^\dagger \|_{L^2(\mu)} + \epsilon.
\end{equation}
\end{proposition}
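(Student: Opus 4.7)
The plan is to combine the oracle inequality of \Cref{lem:W2-oracle-inequality} with the empirical rate of \Cref{lem:wasserstein-empirical-approximation-rate}, and then convert the resulting moment bound into a high-probability statement via Markov's inequality. The first inequality already does the heavy lifting of splitting the error into a deterministic approximation bias $\| \widehat{T}^\dagger - T^\dagger \|_{L^2(\eta)}$ and a purely stochastic fluctuation $2\W_2(\mu, \mu^N)$, so the remaining task is to control the latter with high probability.

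First, I would invoke \Cref{lem:W2-oracle-inequality} to obtain
\begin{equation*}
    \W_2(\mu, \widehat{\mu}^N) \le \| \widehat{T}^\dagger - T^\dagger \|_{L^2(\eta)} + 2 \W_2(\mu, \mu^N).
\end{equation*}
Note that the approximation term is deterministic (it depends only on $\widehat{\T}$, $\eta$, and $T^\dagger$), so it suffices to bound the probability that $2\W_2(\mu, \mu^N) > \epsilon$. Next, I would apply \Cref{lem:wasserstein-empirical-approximation-rate}: under the assumption $q > 2d/(d-2)$ the exponent $(q-2)/q$ strictly exceeds $2/d$, so $N^{-(q-2)/q} \le N^{-2/d}$ for all $N \ge 1$ and the second term inside the parentheses can be absorbed into the first at the cost of enlarging the constant. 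This yields
\begin{equation*}
    \EE \, \W_2^2(\mu, \mu^N) \le C \, \chi_q^2(\mu) \, N^{-2/d},
\end{equation*}
for some $C>0$ independent of $N$. An application of Jensen's inequality then produces $\EE \, \W_2(\mu, \mu^N) \le \sqrt{C} \, \chi_q(\mu) \, N^{-1/d}$.

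Finally, I would apply Markov's inequality to convert this expectation bound into a tail bound: for any $\epsilon>0$,
\begin{equation*}
    \PP\bigl( 2 \W_2(\mu, \mu^N) > \epsilon \bigr) \le \frac{2\, \EE \, \W_2(\mu, \mu^N)}{\epsilon} \le \frac{2\sqrt{C} \, \chi_q(\mu) \, N^{-1/d}}{\epsilon}.
\end{equation*}
Relabeling $2\sqrt{C}$ as the constant $C$ in the statement, on the complementary event---which has probability at least $1 - C \chi_q(\mu) N^{-1/d}/\epsilon$---we have $2\W_2(\mu, \mu^N) \le \epsilon$, and combining this with the oracle inequality yields \eqref{eq:prob-prior-bound}.

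There is no serious obstacle here; the proof is essentially bookkeeping. The only mildly delicate point is verifying that the hypothesis $q > 2d/(d-2)$ is exactly what is needed to collapse the two-term rate in \Cref{lem:wasserstein-empirical-approximation-rate} into the cleaner single-term rate $N^{-2/d}$, so that the final probability has the stated dependence on $N$ through $N^{-1/d}$ alone.
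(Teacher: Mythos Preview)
Your proposal is correct and follows essentially the same route as the paper: combine \Cref{lem:W2-oracle-inequality} with \Cref{lem:wasserstein-empirical-approximation-rate} to get an expectation bound, then apply Markov's inequality to the stochastic term $\W_2(\mu,\mu^N)$. The paper's proof is a two-line sketch; you have simply filled in the details (the Jensen step and the verification that $q>2d/(d-2)$ collapses the rate to $N^{-2/d}$) that the paper leaves implicit.
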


\begin{proof}
     \Cref{lem:W2-oracle-inequality} and \Cref{lem:wasserstein-empirical-approximation-rate} together give  
    \begin{equation}\label{eq:expectation-prior-bound}
        \EE \: \W_2(\mu, \widehat{\mu}^N) \le \| \widehat{T}^\dagger - T^\dagger\|_{L^2(\eta)} + C \chi_q(\mu) N^{-1/d}.
    \end{equation}
    Absorbing the independent constants into  $C$ and applying
     Markov's inequality yields the result. 
\end{proof}

\begin{remark}
    At this point one can directly work with \eqref{eq:expectation-prior-bound} as opposed to the probabilistic statement \eqref{eq:prob-prior-bound}. However, the latter becomes crucial when we analyze 
    the error of posterior measures due to technicalities that arise
    in controlling the perturbation of the $\Cstab$ constants that appeared in \Cref{sec:posterior-bounds}.
\end{remark}

Following a similar approach for the proof of \Cref{prop:prior-error-bound-N} we can obtain a similar result for the 
case where the reference $\eta$ is replaced with an empirical 
approximation $\eta^M$. 

\begin{lemma}\label{lem:Transport-map-Lipschitz}
Suppose  $T \in L^2(\eta)$ is globally Lipschitz with constant $\ell_{T}$, then,
for any pair of measures $\eta, \eta' \in \PP(\U)$ it holds that
$\W_2(T\#\eta, T\#\eta') \leq \ell_{T} \W_2(\eta, \eta')$.
\end{lemma}
\begin{proof}
Let $\pi^\star$ denote the optimal $\W_2$ coupling between $\eta$ and $\eta'$. Then using 
the Lipschitz assumption on $T$ we can write,
    \begin{align*}
        \W_2^2(T\#\eta, T\#\eta') &=   \inf_{\pi \in \Pi( \eta, \eta')}
    \int_{\U \times \U} \| T(x) - T(y) \|_\U^2 \; \pi( \dd x, \dd y) \\
     &\leq \int_{\U \times \U} \| T(x) - T(y) \|_\U^2 \; \pi^\star( \dd x, \dd y)  = \ell_T^2 \int_{\U \times \U} \| x - y \|_\U^2 \; \pi^\star( \dd x, \dd y)
    \end{align*}
    {}
\end{proof}

\begin{theorem}\protect\label{prop:prior-error-bound-M-N}
    Suppose $d > 4$, $\mu, \eta \in \PP(\R^d)$, and $\chi_q(\mu), \chi_q(\eta) < +\infty$ for some $q > 2d/(d-2)$. 
    Further suppose that the elements of $\widehat{\T}$ are uniformly globally 
    Lipschitz with constant $\ell_{\widehat{\T}} > 0$. 
    Then     there exists a constant $C> 0$, independent of $N$ and $M$, 
    such that for any $\epsilon >0$ it holds with probability 
    $(1 - \ell_{\widehat{\T}} C \chi_q(\eta) M^{-1/d}/\epsilon) (1 - C \chi_q(\mu) N^{-1/d}/\epsilon)$ that 
    \begin{equation*}
        \W_2(\mu, \widehat{\mu}^{M,N}) \le \| \widehat{T}^\dagger 
        - T^\dagger \|_{L^2(\eta)} + \epsilon.
    \end{equation*}
\end{theorem}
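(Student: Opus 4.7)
The overall strategy is to parallel the proof of \Cref{prop:prior-error-bound-N} while accounting for the additional empirical approximation introduced by training on $\eta^M$ instead of the exact reference $\eta$. The product structure of the probability in the conclusion strongly suggests an argument in which the stochastic error from the $N$ samples of $\mu$ and the $M$ samples of $\eta$ are isolated into two independent sub-bounds, each controlled by \Cref{lem:wasserstein-empirical-approximation-rate} and then converted to a high-probability statement via Markov's inequality on disjoint sample spaces.

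Concretely, the first step is to repeat the triangle-plus-optimality chain from \Cref{lem:W2-oracle-inequality}, but split the prior-side discretization carefully. Writing out
\begin{align*}
    \W_2(\widehat{T}^{M,N}\#\eta, \mu)
    &\le \W_2(\widehat{T}^{M,N}\#\eta, \widehat{T}^{M,N}\#\eta^M)
        + \W_2(\widehat{T}^{M,N}\#\eta^M, \mu^N)
        + \W_2(\mu^N, \mu),
\end{align*}
applying the optimality $\W_2(\widehat{T}^{M,N}\#\eta^M, \mu^N) \le \W_2(\widehat{T}^\dagger\#\eta^M, \mu^N)$, and then triangulating the right-hand side through $\widehat{T}^\dagger\#\eta$ and $T^\dagger\#\eta = \mu$ produces the oracle-type inequality
\begin{align*}
    \W_2(\widehat{T}^{M,N}\#\eta, \mu)
    &\le \W_2(\widehat{T}^{M,N}\#\eta, \widehat{T}^{M,N}\#\eta^M)
        + \W_2(\widehat{T}^\dagger\#\eta^M, \widehat{T}^\dagger\#\eta)\\
    &\quad + \|\widehat{T}^\dagger - T^\dagger\|_{L^2(\eta)}
        + 2\,\W_2(\mu, \mu^N),
\end{align*}
where the term $\W_2(\widehat{T}^\dagger\#\eta, \mu)$ has been controlled using the stability bound \eqref{eq:W2-stability}.

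The second step handles the three empirical errors. For $\W_2(\mu,\mu^N)$, \Cref{lem:wasserstein-empirical-approximation-rate} under $\chi_q(\mu)<\infty$ gives $\mathbb{E}\,\W_2(\mu,\mu^N) \lesssim \chi_q(\mu) N^{-1/d}$. For $\W_2(\widehat{T}^\dagger\#\eta^M, \widehat{T}^\dagger\#\eta)$, the key observation is that $\widehat{T}^\dagger$ is \emph{deterministic}, so the pushed samples $\{\widehat{T}^\dagger(X_i)\}_{i=1}^M$ are i.i.d.\ from $\widehat{T}^\dagger\#\eta$, i.e.\ $\widehat{T}^\dagger\#\eta^M$ is exactly the empirical measure of $\widehat{T}^\dagger\#\eta$, so \Cref{lem:wasserstein-empirical-approximation-rate} applies again to give a rate $\chi_q(\widehat{T}^\dagger\#\eta) M^{-1/d}$. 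Since the $N$ samples from $\mu$ and the $M$ samples from $\eta$ are independent, Markov's inequality applied separately to each of these two error terms, with threshold $\varepsilon$, yields the product probability $(1 - C\chi_q(\mu)N^{-1/d}/\varepsilon)(1 - C\chi_q(\widehat{T}^\dagger\#\eta)M^{-1/d}/\varepsilon)$.

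The main obstacle is the residual term $\W_2(\widehat{T}^{M,N}\#\eta, \widehat{T}^{M,N}\#\eta^M)$: because $\widehat{T}^{M,N}$ is itself a function of the samples $X_1,\dots,X_M$, the push-forwards $\widehat{T}^{M,N}(X_i)$ are \emph{not} independent, so \Cref{lem:wasserstein-empirical-approximation-rate} does not apply directly. The cleanest way around this, which I would take, is to dominate this term by the deterministic analogue $\W_2(\widehat{T}^\dagger\#\eta^M, \widehat{T}^\dagger\#\eta)$ at the cost of an additional absorbed constant; this can be justified either by a uniform moment bound $\sup_{T\in\widehat{\T}}\chi_q(T\#\eta)\lesssim \chi_q(\widehat{T}^\dagger\#\eta)$ or by leveraging the optimality $\W_2(\widehat{T}^{M,N}\#\eta^M,\mu^N)\le \W_2(\widehat{T}^\dagger\#\eta^M,\mu^N)$ to constrain the moments of $\widehat{T}^{M,N}\#\eta^M$, thereby transferring the control to the $\widehat{T}^\dagger$-term. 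Absorbing all independent constants into a single $C$ and invoking Markov as above then completes the proof.
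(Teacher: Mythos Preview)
Your oracle chain from $\W_2(\widehat{T}^{M,N}\#\eta^M,\mu^N)$ onward matches the paper exactly, and your treatment of the two empirical terms via \Cref{lem:wasserstein-empirical-approximation-rate} and independent Markov bounds is precisely what the paper does. The divergence is entirely in your very first triangle-inequality step, which introduces the term $\W_2(\widehat{T}^{M,N}\#\eta,\widehat{T}^{M,N}\#\eta^M)$.

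That term is a genuine gap. As you correctly observe, $\widehat{T}^{M,N}$ depends on the samples $X_1,\dots,X_M$, so neither \Cref{lem:wasserstein-empirical-approximation-rate} nor the stability bound \eqref{eq:W2-stability} applies. Your two proposed fixes do not close the gap under the theorem's hypotheses: a uniform moment bound $\sup_{T\in\widehat{\T}}\chi_q(T\#\eta)<\infty$ is an additional assumption not made anywhere in the statement, and the optimality of $\widehat{T}^{M,N}$ only constrains $\widehat{T}^{M,N}\#\eta^M$ relative to $\mu^N$ in $\W_2$, which says nothing about the $q$-th moments of $\widehat{T}^{M,N}\#\eta$ or about $\W_2(\widehat{T}^{M,N}\#\eta,\widehat{T}^{M,N}\#\eta^M)$ itself. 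So as written this step cannot be completed.

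The paper simply never creates this term: its proof takes $\widehat{\mu}^{N,M}=\widehat{T}^{N,M}\#\eta^M$ as the object to bound and starts directly with
\[
\W_2(\widehat{T}^{N,M}\#\eta^M,\mu)\le \W_2(\widehat{T}^{N,M}\#\eta^M,\mu^N)+\W_2(\mu^N,\mu),
\]
then uses optimality to swap $\widehat{T}^{N,M}$ for the deterministic $\widehat{T}^\dagger$. After that swap every remaining empirical term involves only $\widehat{T}^\dagger$ or $\mu$, so \Cref{lem:wasserstein-empirical-approximation-rate} applies without obstruction. (Note this is at odds with the definition $\widehat{\mu}^{N,M}:=\widehat{T}^{N,M}\#\eta$ in \eqref{eq:prior-generative-model-M-N}; the proof in the paper is written for the pushforward of $\eta^M$, not $\eta$.) If you adopt the same starting point, your argument from the optimality step onward is already the paper's proof and the problematic term disappears.
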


\begin{proof}
    Proceeding analogously to the proof of \Cref{lem:W2-oracle-inequality}
    we can write 
    \begin{align*}
    &\W_2(\widehat{\mu}^{N,M}, \mu)  = \W_2(\widehat{T}^{N,M}\#\eta, \mu)\\
    &\leq \W_2(\widehat{T}^{N,M}\#\eta, \widehat{T}^{N,M}\#\eta^M) + \W_2(\widehat{T}^{N,M}\#\eta^M, \mu)\quad\text{(By the Triangle Inequality)}\\
    &\leq \ell_{\widehat{\T}}\W_2(\eta, \eta^M) + \W_2(\widehat{T}^{N,M}\#\eta^M, \mu^N)+\W_2(\mu^N, \mu) \\
    &\text{(By the Triangle Inequality, \Cref{lem:Transport-map-Lipschitz})}\\
    &\leq \ell_{\widehat{\T}}\W_2(\eta, \eta^M)+ \W_2(\widehat{T}^{\dagger}\#\eta^M, \mu^N)+\W_2(\mu^N, \mu)\quad\text{(By optimality of $\widehat{T}^{N,M}$)}\\
    &\leq \ell_{\widehat{\T}}\W_2(\eta, \eta^M)+ \W_2(\widehat{T}^{\dagger}\#\eta^M, \widehat{T}^{\dagger}\#\eta)+\W_2(\widehat{T}^{\dagger}\#\eta,\mu^N)+\W_2(\mu^N, \mu) \\
    &\text{(By the Triangle Inequality)}\\
    &\leq 2\ell_{\widehat{\T}}\W_2(\eta, \eta^M)+ \W_2(\widehat{T}^{\dagger}\#\eta,\mu) +2\W_2(\mu^N, \mu)\\
    &\text{(By the Triangle Inequality and applying \Cref{lem:Transport-map-Lipschitz} again)}\\
\end{align*}
The second term can be controlled by \eqref{eq:W2-stability} while we can apply \Cref{lem:wasserstein-empirical-approximation-rate} to the first and third terms and absorbing the lipschitz constant to get, 
\begin{align*}
    \EE \: \W_2(\eta, \eta^M) &\leq C \chi_q(\eta)M^{-1/d}\qquad
    \EE \: \W_2(\mu^N, \mu) \le C \chi_q(\mu) N^{-1/d},
\end{align*}
Applying Markov's inequality to each term, we have that 
\begin{equation*}
    2\ell_{\widehat{\T}}\W_2(\eta, \eta^M) + 2\W_2(\mu^N, \mu) < \epsilon
\end{equation*}
with probability $(1 - 4 \ell_{\widehat{\T}} C \chi_q(\eta) M^{-1/d}/\epsilon) (1 - 2C \chi_q(\mu) N^{-1/d}/\epsilon)$. Absorbing the independent constants into $C$ completes the proof.
\end{proof}
\begin{remark}
    We note that the uniform Lipschitz assumption on the approximation class 
    $\widehat{\T}$ may appear restrictive but it is feasible in practical 
    applications by, for example, weight clipping strategies or regularization 
    for neural nets \cite{virmaux2018lipschitz, bungert2021clip}.
\end{remark}
\begin{remark}
    We note that the factor $\chi_q(\widehat{T}^\dagger \# \eta)$ 
    is innocuous and can be approximated by $\chi_q(\mu)$ provided 
    that $\widehat{T}^\dagger$ is close to $T^\dagger$ in 
    $L^q(\eta)$. 
    To be precise, we observe that 
    \begin{equation*}
    \begin{aligned}
    \chi_q^q(\widehat{T}^\dagger \# \eta) 
        = \int_{\R^d} | u|^q \widehat{T}^\dagger \# \eta(\dd u) 
        = \int_{\R^d} |\widehat{T}^\dagger (u) |^q \eta(\dd u) 
        = \| \widehat{T}^\dagger \|_{L^q(\eta)}^q.        
    \end{aligned}
    \end{equation*}
    Similarly $\chi_q(\mu) = \| T^\dagger \|_{L^q(\eta)}$.
    Thus, $\chi_q(\widehat{T}^\dagger \# \eta) \le 
    \| \widehat{T}^\dagger - T^\dagger \|_{L^q(\eta)} + \chi_q(\mu)$. 
\end{remark}

\subsection{Posterior bounds}\label{sec:posterior-bounds}
Combining the error bounds of \Cref{sec:prior-bounds} with 
the perturbation analysis of \Cref{subsec:w1-pert}
we can present high probability error bounds for the
posteriors of generative priors. We first treat 
priors with bounded support in \Cref{sec:posterior-bound-bounded-support}
followed by the unbounded support case in \Cref{sec:posterior-bounds-unbounded-support}.

Throughout this section we consider the posterior measure 
\begin{equation*}
    \frac{\dd \widehat{\nu}^{N,M}}{ \dd \widehat{\mu}^{N,M}}
    = \frac{1}{Z^{N,M}(y)} \exp( - \Phi(u; y) ), \qquad Z^{N,M}(y) 
    = \widehat{\mu}^{N,M}( \exp( - \Phi(u; y)),
\end{equation*}
as our approximation to the ground truth posterior $\nu$ from \eqref{bayes-rule} with the prior approximated by a generative model
with $N$ samples from $\mu$ and $M$ samples from $\eta$.

\subsubsection{The bounded support case}
\label{sec:posterior-bound-bounded-support}

In this section we assume that $\U \subset \R^d$ is a 
bounded set and the prior $\widehat{\mu}$ and its approximation 
$\widehat{\mu}^{M,N}$ are both supported on $\U$.
We begin with a lemma for controlling the stability constant 
$\Cstab$ from \Cref{thm:main-1-wasserstein-stability} and 
\Cref{coro:likelihood in F-class}. 

\begin{lemma}\label{lem:Cstab-control-bounded-case}
    Suppose $\U$ is bounded, $\mu, \widehat{\mu} \in \PP^2(\U)$ 
    and $\Phi \in \F(f, g, h, \ell, \mu)$ such that 
    $(1 + \| \cdot \|_\U) f \in L^1(\mu)$ and 
    $h(\cdot, y)$ is globally $\ell_h(y)$-Lipschitz, i.e., 
    \begin{equation*}
        |h(u,y) - h(u',y)| \le \ell_h(y) \| u - u'\|_\U.
    \end{equation*}
    Further suppose $\W_2(\mu, \widehat{\mu}) < \epsilon$ for 
    a small constant $\epsilon >0$. Then it holds that
    \begin{equation}\label{eq:cstab-lemma-display}
    \begin{aligned}
        \Cstab(y) \leq  & \left[ 
    \frac{  g^2(y) \| (1 + \| \cdot \|_\U ) f \|_{L^1(\mu)} }
    {\| h(\cdot, y) \|_{L^1(\mu)} (\| h(\cdot, y) \|_{L^1(\mu)}-\ell_h(y) \epsilon)} \right] \\
    &\times  [1 \vee  \Diam(\U)]
    \left(\int_{\U \times \U} [1\vee \ell(u,v,y)]^2 \cdot [f(u) \vee f(v)]^2 \; \pi^\star(\dd u,\dd v) \right)^{\frac{1}{2}},
    \end{aligned}
    \end{equation}
    where $\Diam(\U) = \sup_{u,u' \in \U} \| u - u' \|_\U$ is the 
    diameter of $\U$.
\end{lemma}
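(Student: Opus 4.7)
The plan is to start from the explicit expression for $\Cstab(y)$ given in \eqref{eq:Cstab-full-def} of \Cref{thm:main-1-wasserstein-stability} and improve two quantities using the lemma's extra hypotheses (boundedness of $\U$, smallness of $\W_2(\mu, \widehat{\mu})$, and the global Lipschitz property of $h(\cdot,y)$). Two modifications suffice: pulling the $[1\vee\|u\|_\U\vee\|v\|_\U]^2$ factor out of the integral, and lower-bounding $\|h(\cdot,y)\|_{L^1(\widehat{\mu})}$ in terms of $\|h(\cdot,y)\|_{L^1(\mu)}$.

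First I would handle the moment factor inside the integral. Because $\mu$ and $\widehat{\mu}$ are both supported on $\U$, the optimal coupling $\pi^\star$ is supported on $\U\times\U$, and (under the natural convention that $0 \in \U$) $\|u\|_\U \leq \Diam(\U)$ for every $u \in \U$. Hence pointwise on the support of $\pi^\star$,
\begin{equation*}
[1 \vee \|u\|_\U \vee \|v\|_\U]^2 \leq [1 \vee \Diam(\U)]^2,
\end{equation*}
and this constant factors out of the integral in \eqref{eq:Cstab-full-def}; after extracting the square root it produces exactly the $[1 \vee \Diam(\U)]$ factor appearing in \eqref{eq:cstab-lemma-display}.

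Second I would control the $\widehat{\mu}$-integral of $h(\cdot,y)$ using the Lipschitz hypothesis. Since $h(\cdot,y)/\ell_h(y)$ is $1$-Lipschitz, the Kantorovich--Rubinstein duality \eqref{eq:w1-duality} yields
\begin{equation*}
\bigl| \mu(h(\cdot,y)) - \widehat{\mu}(h(\cdot,y)) \bigr| \leq \ell_h(y)\, \W_1(\mu, \widehat{\mu}).
\end{equation*}
Combining with the elementary inequality $\W_1 \leq \W_2$ (a direct consequence of Jensen applied to the coupling integral in \eqref{W_p definition}) and the hypothesis $\W_2(\mu, \widehat{\mu}) < \epsilon$ gives
\begin{equation*}
\|h(\cdot,y)\|_{L^1(\widehat{\mu})} \geq \|h(\cdot,y)\|_{L^1(\mu)} - \ell_h(y)\,\epsilon,
\end{equation*}
so that $\|h(\cdot,y)\|_{L^1(\widehat{\mu})}^{-1} \leq (\|h(\cdot,y)\|_{L^1(\mu)} - \ell_h(y)\epsilon)^{-1}$. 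Substituting both refinements into \eqref{eq:Cstab-full-def} reproduces \eqref{eq:cstab-lemma-display}.

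The main subtlety I anticipate is ensuring that the new denominator $\|h(\cdot,y)\|_{L^1(\mu)} - \ell_h(y)\epsilon$ is positive, so that the inverse bound is meaningful; this is precisely what the phrase ``for a small constant $\epsilon>0$'' in the hypothesis is signalling, namely $\epsilon < \|h(\cdot,y)\|_{L^1(\mu)}/\ell_h(y)$. A secondary minor point is the comparison $\|u\|_\U \leq \Diam(\U)$, which is immediate when $0 \in \U$ and can otherwise be arranged by a translation that leaves $\W_2(\mu,\widehat{\mu})$, the diameter, and the relevant Lipschitz constants invariant; neither issue affects the structure of the argument.
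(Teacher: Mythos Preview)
Your proposal is correct and follows essentially the same approach as the paper: factor out $[1\vee\Diam(\U)]$ from the integral using boundedness of $\U$, and lower-bound $\|h(\cdot,y)\|_{L^1(\widehat{\mu})}$ via the Lipschitz property of $h$ together with $\W_1\le\W_2<\epsilon$. The only cosmetic difference is that the paper obtains the bound on $\mu(h)-\widehat{\mu}(h)$ by integrating the Lipschitz inequality against the optimal $\W_1$ coupling rather than invoking Kantorovich--Rubinstein duality directly, which is equivalent.
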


\begin{proof}
   Since $\U$ is bounded we can write 
   \begin{equation}\label{eq:Cstab-bounded-domain-case}
   \begin{aligned}
    &\left(\int_{\U \times \U} [1\vee \ell(u,v,y)]^2 \cdot [f(u) \vee f(v)]^2\cdot [1 \vee \|u\|_\U \vee \|v \|_\U]^2\pi^\star(\dd u,\dd v) \right)^{\frac{1}{2}}, \\
    &\leq \left([1 \vee \Diam(\U)^2] \int_{\U \times \U} [1\vee \ell(u,v,y)]^2 \cdot [f(u) \vee f(v)]^2\cdot  \pi^\star(\dd u,\dd v) \right)^{\frac{1}{2}}, \\
    & = [ 1 \vee \Diam(\U)] \left( \int_{\U \times \U} [1\vee \ell(u,v,y)]^2 \cdot [f(u) \vee f(v)]^2\cdot  \pi^\star(\dd u,\dd v) \right)^{\frac{1}{2}}.
    \end{aligned}
   \end{equation}
Now let $\pi^{\star \star}$ be the optimal $\W_1$ coupling between 
$\mu$ and $\widehat{\mu}$ and integrate both sides of the Lipschitz condition $h(u, y) - h(u', y) \le \ell_h(y) \| u -u'\|_\U$ with respect 
to this coupling to obtain 
\begin{equation*}
    \mu( h(\cdot, y) ) - \widehat{\mu}(h(\cdot, y)) 
    \le \ell_h(y) \W_1(\mu, \widehat{\mu}).
\end{equation*}
But $h$ is non-negative and so $\mu(h(\cdot, y)) = \| h(\cdot, y) \|_{L^1(\mu)}$ and similarly $\widehat{\mu}(h(\cdot, y)) = \| h(\cdot, y) \|_{L^1(\widehat{\mu})}$. 
Furthermore, $\W_1(\mu, \widehat{\mu}) \le \W_2(\mu, \widehat{\mu})$
(see for example \cite[Rem.~6.6]{villani2009optimal}, and so
 we obtain the lower bound $\| h(\cdot, y) \|_{L^1(\mu)} - \ell_h(y) \epsilon \le \| h(\cdot, y) \|_{L^1(\widehat{\mu})}$. 
Substituting this bound alongside \eqref{eq:Cstab-bounded-domain-case}
into \eqref{eq:Cstab-full-def} completes the proof. 
{}
\end{proof}
We need a second technical lemma that is useful in the proof of our main theorem below. 

\begin{lemma}\label{lem:wasserstein-expectation-bound}
Suppose $\varphi: \U \times \U \to \R$ is $\ell_\varphi$-Lipschitz with
respect to the metric $\varrho((u,v), (u',v')) := \| u -u' \|_\U + \| v - v' \|_\U$, i.e., 
\begin{equation*}
    | \varphi(u, v)  - \varphi(u', v') | \le \ell_\varphi \left( \| u - u'\|_{\U} 
    + \| v - v' \|_\U \right).
\end{equation*}
Suppose $\mu, \widehat{\mu} \in \PP(\U)$ such that $\W_2(\mu, \widehat{\mu}) < \epsilon$ and let $\pi^\star$ denote the corresponding 
optimal $\W_2$ coupling. Then it holds that 
\begin{equation*}
    \left| \int_{\U} \varphi(u,u) \mu(\dd u)  - \int_{\U \times \U} 
    \varphi(u,v) \pi^\star(\dd u, \dd v) \right| 
    \le \ell_\varphi \epsilon.
\end{equation*}
\end{lemma}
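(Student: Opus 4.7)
The plan is to rewrite the diagonal integral $\int_\U \varphi(u,u)\,\mu(\dd u)$ against the coupling $\pi^\star$ and then invoke the Lipschitz hypothesis on $\varphi$ to reduce everything to an integral of $\|u - v\|_\U$ against $\pi^\star$, which can finally be controlled by $\W_2(\mu, \widehat{\mu})$.

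The first step will be to exploit that $\pi^\star$ has first marginal $\mu$, so that the diagonal integral can be lifted to a product integral:
\begin{equation*}
\int_\U \varphi(u,u)\,\mu(\dd u)
= \int_{\U \times \U} \varphi(u,u)\, \pi^\star(\dd u,\dd v).
\end{equation*}
Consequently, the quantity we wish to bound becomes
\begin{equation*}
\left| \int_{\U \times \U} \bigl[ \varphi(u,u) - \varphi(u,v) \bigr]\, \pi^\star(\dd u,\dd v) \right|.
\end{equation*}

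The second step is to apply the Lipschitz hypothesis with respect to the metric $\varrho$: since $\varrho((u,u),(u,v)) = \| u - v\|_\U$, we have $|\varphi(u,u) - \varphi(u,v)| \le \ell_\varphi \| u - v\|_\U$ pointwise. Pulling the absolute value inside the integral and applying this pointwise bound yields
\begin{equation*}
\left| \int_{\U \times \U} \bigl[ \varphi(u,u) - \varphi(u,v) \bigr]\, \pi^\star(\dd u,\dd v) \right|
\le \ell_\varphi \int_{\U \times \U} \| u - v \|_\U \, \pi^\star(\dd u,\dd v).
\end{equation*}

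Finally, I would bound the last integral by the Wasserstein-$2$ distance via a Cauchy--Schwarz step, using that $\pi^\star$ is a probability measure:
\begin{equation*}
\int_{\U \times \U} \| u - v \|_\U \, \pi^\star(\dd u,\dd v)
\le \left( \int_{\U \times \U} \| u - v \|_\U^2 \, \pi^\star(\dd u,\dd v) \right)^{1/2}
= \W_2(\mu, \widehat{\mu}) < \epsilon,
\end{equation*}
where the equality uses that $\pi^\star$ is an optimal $\W_2$ coupling. Combining the three displays gives the claimed bound $\ell_\varphi \epsilon$. The only subtle point is the choice of coupling: the lemma is stated for the optimal $\W_2$ coupling $\pi^\star$, so one must be careful to use the $L^1 \subset L^2$ Cauchy--Schwarz passage rather than going through $\W_1$ directly; this is what makes the statement compatible with the $\W_2$-based bounds obtained in \Cref{sec:prior-bounds}.
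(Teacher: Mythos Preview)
Your proof is correct and follows essentially the same route as the paper: rewrite the diagonal integral against $\pi^\star$ via the marginal condition, apply the Lipschitz bound pointwise, and then use Cauchy--Schwarz to pass from the $L^1$ transport cost to $\W_2(\mu,\widehat{\mu})$. Your explicit articulation of the marginal step and the reason for the Cauchy--Schwarz passage is, if anything, slightly more detailed than the paper's version.
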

\begin{proof}
    By the Lipschitz assumption on $\varphi$ and Cauchy-Schwarz
    we have that 
    \begin{equation*}
       \begin{aligned}
           \left| \int_{\U} \varphi(u,u) \mu(\dd u)  - \int_{\U \times \U} 
    \varphi(u,v) \pi^\star(\dd u, \dd v) \right| 
    &\le \int_{\U \times \U} | \varphi(u, u) - \varphi(u, v) | \pi^\star ( \dd u, \dd v) \\ 
    & \le \ell_\varphi \int_{\U \times \U} \| u - v \|_\U \pi^\star(\dd u, \dd v)  \\
    & \le \ell_\varphi \left( \int_{\U \times \U} \| u - v \|^2_\U \pi^\star(\dd u, \dd v) \right)^{1/2} = \ell_\varphi \epsilon.
       \end{aligned} 
    \end{equation*}
    {}
\end{proof}

We are now ready to combine
 \Cref{lem:Cstab-control-bounded-case} with 
\Cref{thm:main-1-wasserstein-stability} and \Cref{prop:prior-error-bound-M-N} to obtain our first quantitative error bound for 
the posteriors. 

\begin{theorem}\label{thm:posterior-error-bound-bounded-case}
    Suppose \Cref{prop:prior-error-bound-M-N}, and \Cref{lem:Cstab-control-bounded-case} are satisfied and define the function 
    \begin{equation}\label{def:varphi}
        \varphi(u,v, y):= \big( 1 + \ell(u, v, y)^2 \big) ( f(u)^2 + f(v)^2 ). 
    \end{equation}
    Suppose $\varphi(\cdot, \cdot, y)$
    is $\ell_\varphi(y)$-Lipschitz with respect to the metric 
    \begin{equation}\label{def:d}
        \varrho((u,v), (u',v')) = \| u - u'\|_\U + \| v - v'\|_\U.
    \end{equation}
    Then there exists an independent constant $C> 0$ so that for any $\epsilon >0$,
    with probability $(1 - \ell_{\widehat{\T}} C \chi_q(\eta) M^{-1/d}/\epsilon) (1 - C \chi_q(\mu) N^{-1/d}/\epsilon)$, it holds that
    \begin{equation*}
    \begin{aligned}
        \W_1(\widehat{\nu}^{N, M}, \nu) 
        \le   &  \Cstab'(y) [1 \vee \Diam(\U) ] \left(  \| \widehat{T}^\dagger - T^\dagger \|_{L^2(\eta)} + \epsilon \right),
    \end{aligned}
    \end{equation*}
    where $\Cstab'(y) >0$ has the expression 
    \begin{equation*}
    \Cstab'(y) = 
        \frac{ g^2(y) \left(  
         \| (1 + \| \cdot \|_\U) f  \|_{L^1(\mu)} \right) 
        \left( 2 \| \varphi(\cdot, \cdot, y)\|_{L^1(\mu)} +  \ell_\varphi(y) \epsilon  \right)^{1/2}  }
        {\| h(\cdot, y) \|_{L^1(\mu)} (\| h(\cdot, y) \|_{L^1(\mu)} - \ell_h(y) \epsilon ) }.
    \end{equation*}
\end{theorem}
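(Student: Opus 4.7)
The plan is to chain together three ingredients already proved in the paper: the posterior--prior stability bound \Cref{thm:main-1-wasserstein-stability}, the prior convergence \Cref{prop:prior-error-bound-M-N}, and the two simplification lemmas \Cref{lem:Cstab-control-bounded-case} and \Cref{lem:wasserstein-expectation-bound}. The first two give the basic skeleton of the estimate, while the lemmas exist precisely to massage the stability constant $\Cstab(y)$ into the explicit form $\Cstab'(y)$ that appears in the conclusion, since $\Cstab(y)$ depends on the unknown optimal coupling $\pi^\star$ and on the $L^1(\widehat\mu^{N,M})$ norm of $h$, neither of which can be computed directly.

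Concretely, I would first invoke \Cref{thm:main-1-wasserstein-stability} to write $\W_1(\widehat\nu^{N,M},\nu) \le \Cstab(y)\,\W_2(\widehat\mu^{N,M},\mu)$, and then restrict to the high-probability event of \Cref{prop:prior-error-bound-M-N} on which $\W_2(\widehat\mu^{N,M},\mu) \le \|\widehat T^\dagger - T^\dagger\|_{L^2(\eta)} + \epsilon$. Using this Wasserstein bound as the input $\epsilon$ to \Cref{lem:Cstab-control-bounded-case}, I obtain the factor $[1 \vee \Diam(\U)]$ (absorbing the $[1\vee\|u\|_\U\vee\|v\|_\U]^2$ term by boundedness of $\U$) and replace the hard-to-control denominator $\|h(\cdot,y)\|_{L^1(\widehat\mu^{N,M})}$ by $\|h(\cdot,y)\|_{L^1(\mu)} - \ell_h(y)\epsilon$ via the global Lipschitzness of $h(\cdot,y)$ together with the duality $\W_1 \le \W_2$.

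The remaining piece is the integral $\int [1\vee \ell(u,v,y)]^2 [f(u)\vee f(v)]^2\,\pi^\star(\dd u,\dd v)$, and this is where \Cref{lem:wasserstein-expectation-bound} enters. I upper bound the integrand pointwise by $\varphi(u,v,y)$ as defined in \eqref{def:varphi} (using $[a\vee b]^2 \le a^2 + b^2$ twice), then apply \Cref{lem:wasserstein-expectation-bound} with this Lipschitz $\varphi$ to conclude that
\begin{equation*}
    \int_{\U\times\U} \varphi(u,v,y)\,\pi^\star(\dd u,\dd v) \le \int_\U \varphi(u,u,y)\,\mu(\dd u) + \ell_\varphi(y)\,\epsilon = 2\|\varphi(\cdot,\cdot,y)\|_{L^1(\mu)} + \ell_\varphi(y)\,\epsilon,
\end{equation*}
where the factor $2$ comes from $\varphi(u,u,y) = 2(1 + \ell(u,u,y)^2)f(u)^2$. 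Taking square roots and collecting constants gives $\Cstab'(y)$, and multiplying by the prior bound from Step 2 yields the claimed estimate. The high-probability event is exactly the one from \Cref{prop:prior-error-bound-M-N}, so no further union bound is needed.

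The main obstacle is the internal consistency of $\epsilon$: it plays three simultaneous roles, namely the slack in the high-probability prior bound, the hypothesis $\W_2 < \epsilon$ of \Cref{lem:Cstab-control-bounded-case}, and the Wasserstein input to \Cref{lem:wasserstein-expectation-bound}. Strictly speaking the latter two should be fed $\|\widehat T^\dagger - T^\dagger\|_{L^2(\eta)} + \epsilon$ rather than $\epsilon$; this can be reconciled either by assuming the approximation bias is of order $\epsilon$ or by re-labeling constants, which does not alter the qualitative rate. A second, minor bookkeeping point is that replacing $[1\vee \|u\|_\U \vee \|v\|_\U]$ by $[1\vee \Diam(\U)]$ implicitly uses that $0 \in \U$ (or an inflation of $\Diam$ by a constant factor); this is harmless for the stated bound.
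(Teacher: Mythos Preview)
Your proposal is correct and follows essentially the same route as the paper: invoke \Cref{thm:main-1-wasserstein-stability}, restrict to the high-probability event of \Cref{prop:prior-error-bound-M-N}, then use \Cref{lem:Cstab-control-bounded-case} to remove the $\|h(\cdot,y)\|_{L^1(\widehat\mu^{N,M})}$ denominator and \Cref{lem:wasserstein-expectation-bound} applied to $\varphi$ to replace the $\pi^\star$-integral by a $\mu$-integral plus $\ell_\varphi(y)\epsilon$. Your caveat about the dual role of $\epsilon$ is well taken: the paper's proof has exactly the same slippage, writing ``consider the event $\W_2(\mu,\widehat\mu^{N,M}) < \epsilon$'' even though the event delivered by \Cref{prop:prior-error-bound-M-N} is $\W_2 \le \|\widehat T^\dagger - T^\dagger\|_{L^2(\eta)} + \epsilon$, and then feeding $\epsilon$ alone into the two lemmas.
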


\begin{proof}
 Suppose \Cref{prop:prior-error-bound-M-N} holds and 
 consider the event $\W_2(\mu, \widehat{\mu}^{N,M}) < \epsilon$. 
 Then by \Cref{lem:Cstab-control-bounded-case}
 we have that 
 \begin{equation*}
     \W_1(\widehat{\nu}^{N,M}, \nu) 
     \le C(y) \frac{ \| \widehat{T}^\dagger - T^\dagger \|_{L^2(\eta)} + \epsilon}
        {\| h(\cdot, y) \|_{L^1(\mu)} (\| h(\cdot, y) \|_{L^1(\mu)} - \ell_h(y) \epsilon ) }
 \end{equation*}
 where we have 
 \begin{equation*}
     \begin{aligned}
         C(y) & = g^2(y) ( \|f \|_{L^1(\mu)} + \| f \cdot \| \cdot \|_\U \|_{L^1(\mu)} )\\
                        & \times [ 1 \vee \Diam(\U)] 
                \left( \int_{\U \times \U} [ 1 \vee \ell(u, v,y)]^2 
                \cdot [ f(u) \vee f(v) ]^2 \pi^\star(\dd u, \dd v) \right)^{1/2}.         
     \end{aligned}   
 \end{equation*}
 Note that the integral in the second line still depends on 
 $\widehat{\mu}^{N,M}$ due to the optimal coupling 
 $\pi^\star \in \Pi(\mu, \widehat{\mu}^{N,M}) $ and so we need to control 
 the fluctuations of this term. Applying 
 \Cref{lem:wasserstein-expectation-bound} with the function 
 $\varphi(u, v, y) = \big( 1 + \ell(u,v, y)^2 \big) ( f(u)^2 + f(v)^2)$, 
 and thanks to the hypothesis of the theorem, gives 
 \begin{equation*}
 \int_{\U \times \U} [ 1 \vee \ell(u, v,y)]^2 
                \cdot [ f(u) \vee f(v) ]^2 \pi^\star(\dd u, \dd v)  
                \le 2 \int_\U \big( 1 + \ell^2(u, u, y) \big) f^2(u)
                \mu(\dd u) + \ell_\varphi(y) \epsilon.
 \end{equation*}
 Substituting this bound into the expression for $C(y)$ completes the proof.
 \end{proof}

\begin{remark}
    Assuming both $\epsilon$ and $\| \widehat{T}^\dagger - T^\dagger\|_{L^2(\eta)}$ are small and ignoring higher order terms, this 
    theorem simply states that the posterior 
    error is controlled by the approximation error of  $\widehat{T}^\dagger$
    and $\epsilon$ itself which quantifies the stochastic error due to 
    the empirical approximation of the prior. The multiplying constant
    $\Cstab'(y)$ in the bound, although complex in its expression, is effectively
    controlled by generalized moments of the prior $\mu$ and hence can 
    be bounded so long as $\mu$ has sufficiently light tails.
\end{remark}

\begin{remark}
    Note that the reason why we considered $\U$ to be bounded 
    in this section is largely due to the  $\| h(\cdot, y) \|_{L^1(\widehat{\mu})}$ factor in \eqref{eq:Cstab-full-def} since 
    this quantity is random when $\widehat{\mu} = \widehat{\mu}^{N,M}$.
    When $\U$ is bounded any Lipschitz $h$ is automatically globally 
    Lipschitz which allows us to control the variation of  
    $\| h( \cdot, y) \|_{L^1(\mu)} -  \| h( \cdot, y) \|_{L^1(\widehat{\mu}^{N,M})}$. Hence  our proof in this 
    section can naturally be extended to the case of a globally 
    Lipschitz $h$ with unbounded $\U$.
\end{remark}

\begin{remark}
    We note that the dependence of $\Cstab'$ on the data $y$ is 
    explicit through the functions $h$ and $\varphi$ and their 
    Lipschitz constants. This provides a path for further obtain high-probability bounds on posterior perturbations with respect to the 
    data $y$ as well as the empirical data. This requires detailed calculations using the explicit forms of $h$ and $\varphi$, 
    for specific examples which we leave as a future research direction.
\end{remark}

\begin{remark}
We note that our bounds can be simplified whenever the functions $\ell$ and $f$ are bounded. 
For example, suppose that  
\begin{equation*}
\ell(u,v,y)\le \ell_{\max}(y) <+\infty
\qquad \text{and} \qquad 
f(u)\le f_{\max} <+\infty \qquad\text{for all }u,v\in \U.
\end{equation*}
In this case, we can drop the assumptions on the function $\varphi$ in \Cref{thm:posterior-error-bound-bounded-case} and directly bound the constant $\Cstab(y)$ in \eqref{eq:cstab-lemma-display} as 
\begin{equation*}
    \Cstab(y) \le \frac{g^2(y) f_{\max}^2 \| 1 + \| \cdot \|_\U \|_{L^1(\mu)} }{\| h(\cdot, y) \|_{L^1(\mu)} (\| h(\cdot, y) \|_{L^1(\mu)}-\ell_h(y) \epsilon)} [ 1 \vee \Diam(\U)] [1 \vee \ell_{\max}(y)],
\end{equation*}
which allows us to obtain an equivalent version of \Cref{thm:posterior-error-bound-bounded-case} 
with a cleaner constant $\Cstab$ but essentially the same rate in terms of 
the parameter $\epsilon$ and the approximation error $\| \widehat{T}^\dagger - T^\dagger \|_{L^2(\eta)}$.
The same bound can also be extended to the unbounded support case 
described in the next section with the exception that we will assume that $\ell(\cdot, \cdot, y)$ and 
$f$ are only locally bounded over $\U$. 
\end{remark}

\subsubsection{The unbounded support case}
\label{sec:posterior-bounds-unbounded-support}

Let us now turn our attention to the case where 
$\U$ is unbounded and hence $\mu$ may have unbounded support. 
We will deal with this case by truncating the prior 
to a ball of radius $r$ and applying our bounded support 
result with an additional error term due to the 
trimming of the tails. 
To this end,  define the trimmed prior 
\begin{equation*}
    \mu_r(A) := \frac{1}{\mu(B_r)} \mu(A \cap B_r) 
\end{equation*}
for all Borel sets $A \subseteq \U$ and $B_r$ denoting 
the ball of radius $r > 0$ in $\U$. Further define 
the trimmed posterior 
\begin{equation*}
    \frac{\dd \nu_r}{\dd \mu_r} = \frac{1}{Z_r(y)} 
    \exp( - \Phi(u;y) ), \qquad 
    Z_r(y):= \mu_r( \exp(- \Phi(\cdot; y) ) ).
\end{equation*}
We then have the following lemma, quantifying the effect of 
trimming the prior tails. 
\begin{lemma}\label{lem:trimmed-posterior-bound}
    Consider the above setup, we then have the bounds\footnote{Indeed, the fact that $\mu$ and $\nu$ are 
    prior and posterior measures is innocuous in this lemma and 
    the bounds apply to any measure and its corresponding 
    trimmed version. We simply state them this way for easier 
    use later.} 
    \begin{equation*}
        \W_1(\nu, \nu_r) \le \frac{2}{r} \nu( \| \cdot \|_\U)^2, \qquad \W_2(\mu, \mu_r) \le \frac{2}{r} \mu( \| \cdot \|_\U^2).
    \end{equation*}
\end{lemma}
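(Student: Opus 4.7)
The plan is to bound each Wasserstein distance by the cost of an explicit transport plan between the original and trimmed measures, then control that cost using elementary moment estimates. Since $\mu_r = \mu(B_r)^{-1}\mu|_{B_r}$, I would first decompose
\begin{equation*}
\mu \;=\; \mu|_{B_r} + \mu|_{B_r^c} \;=\; \mu(B_r)\,\mu_r + \mu(B_r^c)\,\mu_{B_r^c},
\end{equation*}
where $\mu_{B_r^c}$ is the normalized restriction of $\mu$ to $B_r^c$. A natural coupling $\pi\in\Pi(\mu,\mu_r)$ is then obtained by placing the identity (diagonal) coupling on $B_r\times B_r$ with mass $\mu|_{B_r}$, and the independent product $\mu(B_r^c)\,(\mu_{B_r^c}\otimes \mu_r)$ on $B_r^c\times B_r$. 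A direct check confirms the first marginal is $\mu|_{B_r}+\mu|_{B_r^c}=\mu$ and the second is $(\mu(B_r)+\mu(B_r^c))\mu_r=\mu_r$. The analogous construction applies to $(\nu,\nu_r)$.

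Only the off-diagonal block contributes to the cost, so using $\|u-v\|^p\le 2^{p-1}(\|u\|^p+\|v\|^p)$ I get
\begin{equation*}
\W_p^p(\mu,\mu_r) \;\le\; \int\!\|u-v\|^p\,d\pi \;\le\; 2^{p-1}\!\int_{B_r^c}\!\|u\|^p\,d\mu \;+\; 2^{p-1}\mu(B_r^c)\,\mu_r(\|\cdot\|^p).
\end{equation*}
For $p=1$, an equivalent route uses the Kantorovich--Rubinstein duality: for any $1$-Lipschitz $\phi$ with $\phi(0)=0$ (so that $|\phi(u)|\le\|u\|_\U$), one writes
\begin{equation*}
\nu(\phi)-\nu_r(\phi) \;=\; \int_{B_r^c}\!\phi\,d\nu \;-\; \frac{\nu(B_r^c)}{\nu(B_r)}\int_{B_r}\!\phi\,d\nu,
\end{equation*}
and estimates each term separately; this gives the cleanest form of the $\W_1$ bound.

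The bookkeeping step is to convert these integrals into moment expressions. I would apply Markov to get $\mu(B_r^c)\le \mu(\|\cdot\|^p)/r^p$, bound $\mu_r(\|\cdot\|^p)\le \mu(\|\cdot\|^p)/\mu(B_r)$, and, crucially, exploit the fact that on $B_r^c$ we have $\|u\|_\U\ge r$, so that $\|u\|^p\le r^{-1}\|u\|^{p+1}$. This is what lets one "trade" a factor of $1/r$ against one extra power in the integrand, e.g. $\int_{B_r^c}\|u\|\,d\nu\le r^{-1}\nu(\|\cdot\|^2)$ for the $\W_1$ case. Combining these pieces yields a bound of the form (constant)$\cdot r^{-1}\cdot(\text{moment of }\mu)$ in each case, matching the structure of the claimed inequalities.

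The main obstacle is getting a genuine $1/r$ rate (as opposed to merely showing the right-hand side vanishes as $r\to\infty$): the tail integral $\int_{B_r^c}\|u\|^p d\mu$ is not, by itself, $O(1/r)$ from the $p$th moment alone, so one must either invoke the trick $\|u\|^p\le \|u\|^{p+1}/r$ on $B_r^c$ or an analogous Cauchy--Schwarz step, to produce the missing factor of $1/r$. Handling the denominator $\mu(B_r)$ in the bound for $\mu_r(\|\cdot\|^p)$ is a minor additional point: for the bound to be informative one takes $r$ large enough that $\mu(B_r)\ge 1/2$, which is automatic in the regime of interest.
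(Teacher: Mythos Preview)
Your proposal is correct and follows essentially the same route as the paper: both construct the identical coupling (diagonal on $B_r$, independent redraw from the trimmed measure on $B_r^c$), observe that only the off-diagonal block contributes to the cost, and then extract a factor of $1/r$ via a Markov-type estimate on the tail. The only cosmetic difference is that the paper uses $\|u'\|_\U \le \|u\|_\U$ on the off-diagonal block (since $u'\in B_r$, $u\in B_r^c$) to reduce to $2\|u\|_\U$ and then applies Markov to the tail probability, whereas you propose the equivalent device of bounding $\|u\|_\U^p \le r^{-1}\|u\|_\U^{p+1}$ directly inside the tail integral; your alternative Kantorovich--Rubinstein argument for $p=1$ is not in the paper but is of course also fine.
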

\begin{proof}
    Let us begin with the posterior $\W_1$ bound.
    Define the measure 
    \begin{equation*}
        \nu_r^c(A) := \frac{1}{\nu(B_r^c)} 
        \nu(A \cap B_r^c)
    \end{equation*}
    where $B_r^c$ is the complement of $B_r$ in $\U$.
    Then consider the coupling $\pi_1 \in \Pi(\nu, \nu_r)$
    where $(u, u') \sim \pi_1$ are generated by the 
    following procedure
    \begin{equation*}
    \begin{aligned}
        \text{draw}\quad u \sim \nu \quad 
        \text{if}\quad  \|u\|_\U \le r, \quad \text{set}\quad  u' = u, 
        \quad \text{else}, \quad \text{draw}\quad 
        u' \sim \nu_r.
    \end{aligned}
    \end{equation*}
    Then by the definition of $\W_1$ we have that
    \begin{equation*}
    \begin{aligned}
        \W_1(\nu, \nu_r) &\le 
        \int_{\U \times \U} \| u - u'\|_\U \pi_1(\dd u, \dd u') \\
        & = \int_{\{ u \in B_r\}} 
        \int_\U \| u - u' \|_\U \pi_1(\dd u' \mid u) \pi_1(\dd u) 
        + \int_{\{ u \in B_r^c\}} 
        \int_\U \| u - u' \|_\U \pi_1(\dd u' \mid u) \pi_1(\dd u).
    \end{aligned}
    \end{equation*}
    The first term is zero by construction while for the second term
    we have the bound 
    \begin{equation*}
    \begin{aligned}
        \int_{\{ u \in B_r^c\}} 
        \int_\U \| u - u' \|_\U \pi_1(\dd u' \mid u) \pi_1(\dd u)
        &\le \int_{\{ u \in B_r^c\}} 
        \int_\U \| u \|_\U + \| u' \|_\U \pi_1(\dd u' \mid u) \pi_1(\dd u) \\
        & \le 2 \nu( \U \cap B_r^c) \nu( \| \cdot \|_\U ),
        \end{aligned}
    \end{equation*}
    where in the last inequality we used the 
    fact that $\| u \|_\U \ge \| u' \|_\U$ a.s. under $\pi_1$.
    An application of Markov's inequality 
    yields $\nu( \U \cap B_r^c) \le \frac{\nu( \| \cdot \|_\U)}{r}$
    which completes the proof of the $\W_1$ bound for priors. 

    We can apply the same proof technique to the posteriors, 
    writing $\pi_0 \in \Pi(\mu, \mu_r)$ for the analogous 
    coupling obtained by replacing $\nu, \nu_r$ with $\mu, \mu_r$
    in the definition of $\pi_1$. Then we obtain the bound 
    \begin{equation*}
        \W_2^2( \mu, \mu_r)
        \le 2 \int_{\{ u \in B_r^c\}} 
        \int_\U \| u\|_\U^2 + \| u' \|_\U^2 \pi_0( \dd u' \mid u) 
        \pi_0(\dd u)\le 4 \mu( \U \cap B_r^c) \mu( \| \cdot \|_\U^2),
    \end{equation*}
    where the extra factor of 2 is due to $(a + b)^2 \le 2 (a^2 + b^2)$.  Applying Markov's inequality we have 
    $\mu( \U \cap B_r^c) \le \frac{\mu(\| \cdot \|_\U^2)}{r^2}$
    which completes the proof.
\end{proof}

Equipped with the above lemma we are finally ready to present our complete 
error bound for the case of unbounded parameter spaces $\U$.

\begin{theorem}\label{thm:posterior-error-bound-unbounded-case}
    Suppose $d > 4$, $\U$ is unbounded, and fix $r \ge 1$.
    Consider the trimmed generative prior model 
    \begin{equation*}
        \widehat{\mu}^{N,M}_r := \widehat{T}^{N,M}_r \# \eta \qquad 
        \widehat{T}^{N,M}_r := \argmin_{T \in \widehat{\T}}
        \W_1( T \# \eta^M, \mu_r^N).
    \end{equation*}
    where the approximation class $\widehat{\T}$ is globally 
    $\ell_{\widehat{\T}}$-Lipschitz. Further write $\widehat{\nu}^{N,M}_r$
    for the posterior measure arising from $\widehat{\mu}^{N,M}_r$. 
    Write $T^\dagger_r$ for the map that satisfies 
    $\mu_r = T^\dagger_r \# \eta$ and define 
    $\widehat{T}^\dagger_r := \argmin_{T \in \widehat{\T}} 
    \| T - T^\dagger_r \|_{L^2(\eta)}$, $\widehat{T}_r^\star := \argmax_{T\in \widehat{\T}} \W_2(T\#\eta, T\#\eta^M)$. 
    
    Suppose the following conditions hold: 
    \begin{enumerate}[label=(\roman*)]
        \item     $\Phi \in \F(f, g, h, \ell, \mu)$ such that 
    $(1 + \| \cdot \|_\U) f \in L^1(\mu_r)$.
    \item The function $h(\cdot, y)$ is $\ell_{h,r}(y)$-Lipschitz 
    over $\U \cap B_r$.
    \item The function $\varphi(\cdot, \cdot, y)$ defined in \eqref{def:varphi} 
    is $\ell_{\varphi, r}(y)$-Lipschitz with respect to the 
    metric $\varrho$ defined in \eqref{def:d} over $(\U \cap B_r) \times (\U \cap B_r)$.
        
        \item $\chi_q(\mu), \chi_q(\eta) <
        + \infty$ for some $q > 2d/(d-2)$.
    \end{enumerate}
    Then there exists a constant $C> 0$ so that for any 
    $\epsilon >0$,  with probability $(1 - \ell_{\widehat{\T}} C \chi_q(\eta) M^{-1/d}/\epsilon) (1 - C \chi_q(\mu) N^{-1/d}/\epsilon)$, it holds that
    \begin{equation*}
        \W_1(\widehat{\nu}^{N,M}_r, \nu) 
        \le \Cstab(y,r) \cdot ( \| \widehat{T}^\dagger_r 
        - T^\dagger \|_{L^2(\eta)} + \epsilon) 
        + \frac{2}{r} \nu( \| \cdot \|_\U)^2, 
    \end{equation*}
    where $\Cstab'(y,r)$ has the expression
    \begin{equation*}
            \Cstab(y,r) = 
        \frac{2 g^2(y) \left(  
         \| (1 + \| \cdot \|_\U) f  \|_{L^1(\mu_r)} \right) 
        \left( 2 \| \varphi(\cdot, \cdot, y)\|_{L^1(\mu_r)} +  \ell_{\varphi,r}(y) \epsilon  \right)^{1/2}  r}
        {\| h(\cdot, y) \|_{L^1(\mu_r)} (\| h(\cdot, y) \|_{L^1(\mu_r)} - \ell_{h,r}(y) \epsilon ) }.
    \end{equation*}
\end{theorem}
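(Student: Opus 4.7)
The plan is to reduce to the bounded-support result \Cref{thm:posterior-error-bound-bounded-case} by truncating the prior to the ball $B_r$ and paying a tail cost. The starting point is the triangle inequality
\begin{equation*}
\W_1(\widehat{\nu}^{N,M}_r, \nu) \le \W_1(\widehat{\nu}^{N,M}_r, \nu_r) + \W_1(\nu_r, \nu),
\end{equation*}
which splits the error into an approximation piece (between the trimmed posterior and the generative posterior built on trimmed data) and a truncation piece (the difference between the true posterior and its trimmed counterpart). The second, tail piece is immediately controlled by \Cref{lem:trimmed-posterior-bound}, yielding the $\tfrac{2}{r} \nu(\|\cdot\|_\U)^2$ contribution that appears as the final summand in the claim. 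This is the one place where the unbounded nature of $\U$ is actually paid for.

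For the first piece I would simply invoke \Cref{thm:posterior-error-bound-bounded-case} with the bounded ambient space $\U \cap B_r$ in place of $\U$, the prior pair $(\mu,\widehat{\mu}^{N,M})$ replaced by $(\mu_r, \widehat{\mu}^{N,M}_r)$, and the map $T^\dagger$ replaced by $T^\dagger_r$. Hypotheses (i)--(iii) of the theorem are exactly what is needed to verify the hypotheses of \Cref{thm:posterior-error-bound-bounded-case} on $B_r$: the local Lipschitz constants $\ell_{h,r}(y)$ and $\ell_{\varphi,r}(y)$ play the role of the global Lipschitz constants $\ell_h(y)$ and $\ell_\varphi(y)$ on the bounded domain, and $(1+\|\cdot\|_\U)f \in L^1(\mu_r)$ is a direct consequence of $\mu_r$ being compactly supported together with hypothesis (i). The diameter factor $[1 \vee \Diam(\U \cap B_r)] \le 2r$ is what eventually produces the explicit factor of $r$ in the stated $\Cstab(y,r)$.

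To obtain the stated probability of the event, I would apply \Cref{prop:prior-error-bound-M-N} to the trimmed reference $\mu_r$ (with $\chi_q(\mu_r) \le \chi_q(\mu)$, which is harmless) and to $\widehat{T}^\dagger_r \# \eta$ with the moment assumption (iv). This gives $\W_2(\mu_r, \widehat{\mu}^{N,M}_r) \le \|\widehat{T}^\dagger_r - T^\dagger_r\|_{L^2(\eta)} + \epsilon$ on the claimed joint event. Plugging this into \Cref{thm:posterior-error-bound-bounded-case}, bounding $\Diam(\U \cap B_r) \le 2r$, and adding the truncation term from \Cref{lem:trimmed-posterior-bound} produces the target bound.

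The only real obstacle is bookkeeping and matching constants: tracking the factor $r$ that enters through the diameter, propagating the restricted Lipschitz constants $\ell_{h,r}(y),\ell_{\varphi,r}(y)$ through the expression for $\Cstab'$ in the bounded case to get the stated $\Cstab(y,r)$, and ensuring the probability estimate remains valid when the moments $\chi_q(\mu_r),\chi_q(\widehat{T}^\dagger_r \# \eta)$ replace $\chi_q(\mu),\chi_q(\widehat{T}^\dagger \# \eta)$. A minor note is that the approximation error naturally emerges as $\|\widehat{T}^\dagger_r - T^\dagger_r\|_{L^2(\eta)}$, which matches $\|\widehat{T}^\dagger_r - T^\dagger\|_{L^2(\eta)}$ as stated up to the (typically vanishing) term $\|T^\dagger_r - T^\dagger\|_{L^2(\eta)}$ that captures how well the trimmed map approximates the full one; for any reasonable $r$ with sufficient prior moments this is controlled by the same tail quantities already appearing in \Cref{lem:trimmed-posterior-bound}.
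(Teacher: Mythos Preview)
Your proposal is correct and follows exactly the paper's approach: split via the triangle inequality $\W_1(\widehat{\nu}^{N,M}_r, \nu) \le \W_1(\widehat{\nu}^{N,M}_r, \nu_r) + \W_1(\nu_r, \nu)$, bound the tail term with \Cref{lem:trimmed-posterior-bound}, and handle the first term by applying \Cref{thm:posterior-error-bound-bounded-case} with $\mu$ replaced by $\mu_r$ and $\Diam(\U\cap B_r)\le 2r$. Your bookkeeping is in fact more detailed than the paper's short proof, and your observation that the approximation error emerges naturally as $\|\widehat{T}^\dagger_r - T^\dagger_r\|_{L^2(\eta)}$ rather than $\|\widehat{T}^\dagger_r - T^\dagger\|_{L^2(\eta)}$ is a valid point that the paper does not address.
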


\begin{proof}
By the triangle inequality we have 
$\W_1(\widehat{\nu}^{N,M}_r, \nu) \le \W_1(\widehat{\nu}^{N,M}_r, \nu_r) 
+ \W_1(\nu, \nu_r)$. We can bound the second term using 
\Cref{lem:trimmed-posterior-bound} while the first term is
bounded using \Cref{thm:posterior-error-bound-bounded-case} by replacing 
$\mu$ with $\mu_r$ and observing $\Diam(\U) \le 2 r$.
\end{proof}

\begin{remark}
    Observe that the posterior moment $\nu(\| \cdot \|_\U)$ can further be 
    controlled by the prior moment $\mu(\| \cdot \|_\U)$ using 
    our assumptions on the likelihood potential  $\Phi$, i.e., by 
    Cauchy-Schwarz we have
    \begin{equation*}
        \nu( \| \cdot \|_\U) \le \frac{1}{Z(y)} \mu(\| \cdot \|_\U^2)^{1/2}
        (g(y) + \| f \|_{L^2(\mu)}).
    \end{equation*}
    Since $Z(y) \ge \| h(\cdot, y) \|_{L^1(\mu)}$ we can further obtain the 
    bound 
    \begin{equation*}
        \nu( \| \cdot \|_\U)^2 \le \frac{ (g(y) + \| f \|_{L^2(\mu)})^2 \mu( \| \cdot \|_\U^2)}{\| h(\cdot, y) \|_{L^1(\mu)}^2}. 
    \end{equation*}
    This reveals that the posterior error due to trimming, has the 
    same detrimental scaling due to the evidence $Z(y)$ or similarly 
    $\| h(\cdot, y) \|_{L^1(\mu)}$ as the constant $\Cstab$ and so 
    the entire bound becomes innocuous when the data $y$ is 
    "unlikely" in the sense of having very small evidence $Z(y)$.
    This is a well-known phenomenon that was observed in \cite{sprungk2020local} and highlights the mechanism underpinning the 
    brittleness of BIPs in \cite{owhadi2015brittleness}.
\end{remark}

\section{Numerical Results}\label{sec:Numerical Results}
In this section we collect a series of numerical experiments 
aimed at verifying some of our error bounds. In particular, we 
focus on the posterior perturbation bounds in terms 
of the priors from \Cref{sec:wasserstein-perturbation}. 
In \Cref{sec:Toy Example} we present a series of 2D benchmarks 
where we explicitly check that the posterior $\W_1$ distance is 
controlled by the prior $\W_2$ distance. In \Cref{sec:HD Example} 
we present an example of a nonlinear PDE inverse problem with a 
generative prior and demonstrate the effectiveness of generative 
priors for more complex and realistic examples. Our code to reproduce the numerical results can be found in a public Github repository\footnote{https://github.com/TADSGroup/DataDrivenBIPwithGenerativePrior/tree/main}.

\subsection{2D Benchmarks}\label{sec:Toy Example}
We begin with a set of low dimensional benchmarks where posterior 
samples can be generated accurately and cheaply using importance sampling. 
This allows us to compute precise Wasserstein distances 
between true posteriors and those with a generative prior without 
additional noise due to posterior inference algorithms such 
as MCMC that can often pollute our results. 

\paragraph{Problem setup}
We took $\U = \R^2$ and took our prior measures $\mu$ to be 
the benchmark distributions taken from \cite{kingma2018glow}; 
these are depicted in 
\Cref{Fig_2DResult}. 
Here we present the Swissroll, Pinwheel, 
and Checkerboard distributions since we found these  
to be good representatives of our findings. We found 
very similar results for the other benchmarks from 
\cite{kingma2018glow}.

To define the posterior measures $\nu$ we considered the likelihood 
potential 
\begin{equation*}
    \Phi(u;y) = \frac{1}{2\sigma^2} \| y - F u \|_2^2, 
    \qquad F = \begin{bmatrix}
        1 & 0 \\ 
        0 & 0 
    \end{bmatrix}.
\end{equation*}
This likelihood arises from the familiar data model $y = Fu + \xi$
where the additive noise $\xi \sim N(0, \sigma^2 I)$. Here we chose $\sigma = 0.5$. The resulting
posteriors are depicted in \Cref{Fig_2DResult}
for a fixed choice of the data $y = ( 0 , 0)^T$. 
The presented heatmaps were obtained by 
generating prior samples that were reweighted
and resampled according to their 
likelihoods. We took these to be samples from 
the true posterior measures $\nu$ in our experiments.

\paragraph{The generative prior}
We used a Wasserstein GAN with gradient penalty
(WGAN-gp) \cite{gulrajani2017improved} as a generative approximation to the priors above and computing the maps $\widehat{T}$
and in turn the priors $\widehat{\mu}$. 
In general, we found the WGAN-gps to be difficult to train in order 
to obtain highly accurate approximations to the priors, to this end, 
we modified the training procedure by borrowing ideas 
from stochastic interpolants \cite{albergo2025stochastic}; the 
details are summarized in \Cref{app:WGAN-training}.
With a generative prior at hand, we can use the same 
reweighting trick as the true posterior to obtain samples 
from the approximate posteriors $\widehat{\nu}$
In \Cref{Fig_2DResult} we show an instance of 
the learned priors and the resulting posteriors.

\paragraph{Experiments and results}
Our goal was to investigate the validity of the perturbation 
bounds of \Cref{sec:wasserstein-perturbation} and in particular
\Cref{thm:main-1-wasserstein-stability}. To this end, we compared 
the $\W_1(\nu, \widehat{\nu})$ and compared it with 
$\W_2(\mu, \widehat{\mu})$ for various 
generative priors $\widehat{\mu}$ by varying the size of 
the training data for the GAN as well as the size of the 
networks. In particular, we considered the following scenarios: 
\begin{itemize}
    \item {\it Effect of the sample size $N$:} We fixed our WGAN-gp
    architecture and training procedure (see \Cref{app:WGAN-training}) and increased the number of prior training samples $N$
    from $2^9$ to $2^{14}$. 

    \item {\it Effect of network width:} We fixed the 
    number of training samples $N = 10000$ and used a generator 
    with three layers but we modified the width of the 
    layers from $2^4$ to $2^8$. The number training epochs 
    and the discriminator architectures were kept fixed. 

    \item {\it Effect of training epochs:}
    Finally, we considered a fixed $N = 10000$ and a network 
    of depth 3 and width 128 and increased the number of 
    training epochs from $2^8$ to $2^{12}$. Once again the 
    discriminator architectures were kept fixed.    
\end{itemize}
We think of the first experiment (modifying $N$) as 
representing the stochastic training errors while the 
other experiments reflect the approximation bias/error 
of our parameterizations. 

We present our results in \Cref{WDsample complexity,WDwidth,WDepochs}
where the prior $\W_2$ distances are compared with the posterior 
$\W_1$ distances as a function of the underlying variables. 
We repeated each experiments 5 times and presented the average 
errors along with the error bars obtained from repeated experiments. 
We consistently observed that the posterior $\W_1$ distance 
was controlled by the prior $\W_2$ distances with the underlying 
slopes matching closely as shown in \Cref{tab:WDslopes}. 
We take this as a validation of our bound in \Cref{thm:main-1-wasserstein-stability}. Interestingly, we observed that the 
prior $\W_2$ slopes (and by extension the posterior slops) 
did not match the empirical $N^{-1/2}$ rate predicted by 
\cite{fournier2015rate}. We take this as evidence that 
WGAN-gp is not a good estimator of the prior in the $\W_2$ sense.

\begin{figure}[htp]
\begin{overpic}[width=0.23\textwidth]{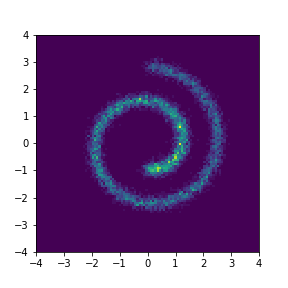}%
\put(250,920){\scriptsize True prior $\mu$}
\put(-30,320){\scriptsize \rotatebox{90}{Swissroll}}
\end{overpic}%
\hspace{0.01\textwidth}%
\begin{overpic}[width=0.23\textwidth]
{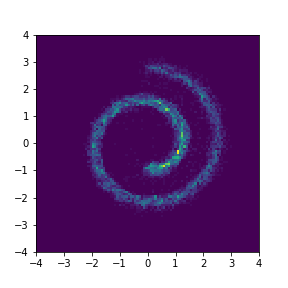}%
\put(100,920){\scriptsize Approximate prior $\widehat{\mu}$}
\end{overpic}
\hspace{0.01\textwidth}%
\begin{overpic}[width=0.23\textwidth]
{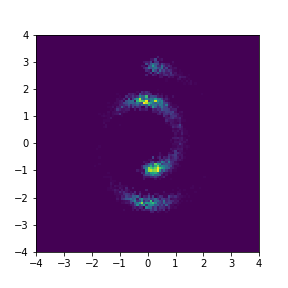}%
\put(180,920){\scriptsize True posterior ${\nu}$}
\end{overpic}
\hspace{0.01\textwidth}%
\begin{overpic}[width=0.23\textwidth]
{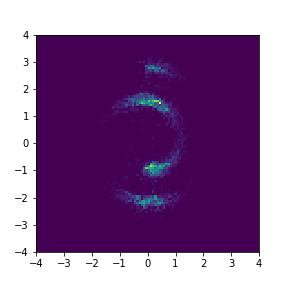}%
\put(30,920){\scriptsize Approximate posterior $\widehat{\nu}$}
\end{overpic}
\\
\begin{overpic}[width=0.23\textwidth]{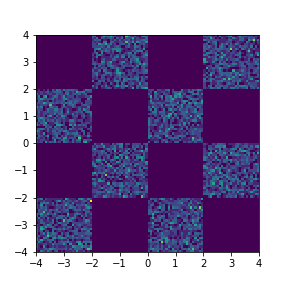}%
\put(-30,250){\scriptsize \rotatebox{90}{Checkerboard}}
\end{overpic}%
\hspace{0.01\textwidth}%
\begin{overpic}[width=0.23\textwidth]
{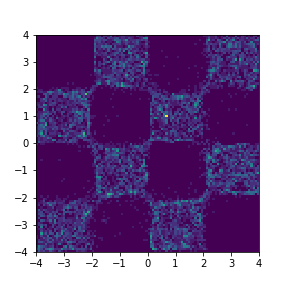}%
\end{overpic}
\hspace{0.01\textwidth}%
\begin{overpic}[width=0.23\textwidth]
{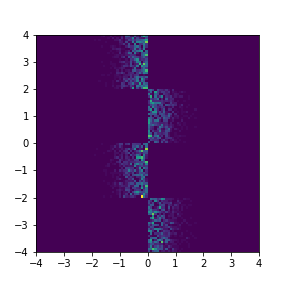}%
\end{overpic}
\hspace{0.01\textwidth}%
\begin{overpic}[width=0.23\textwidth]
{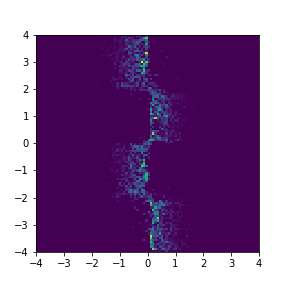}%
\end{overpic}
\\
\begin{overpic}[width=0.23\textwidth]{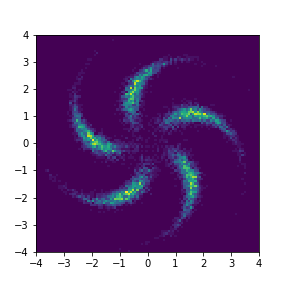}%
\put(-30,320){\scriptsize \rotatebox{90}{Pinwheel}}
\end{overpic}%
\hspace{0.01\textwidth}%
\begin{overpic}[width=0.23\textwidth]
{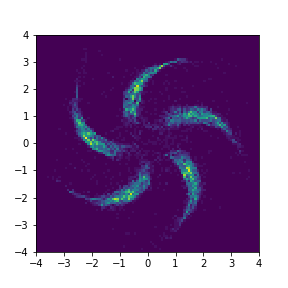}%
\end{overpic}
\hspace{0.01\textwidth}%
\begin{overpic}[width=0.23\textwidth]
{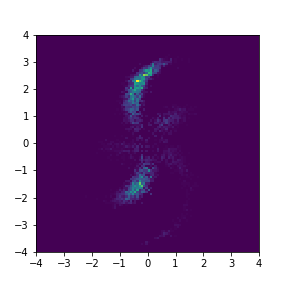}%
\end{overpic}
\hspace{0.01\textwidth}%
\begin{overpic}[width=0.23\textwidth]
{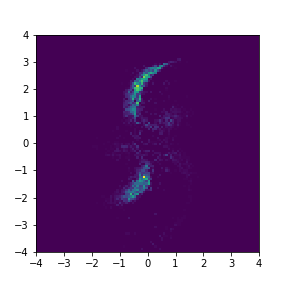}%
\end{overpic}
\caption{Example of 2D Benchmarks from \Cref{sec:Toy Example} with three example distributions. From 
left to right, the first column depicts the true prior $\mu$, the second column shows the data-driven prior
$\widehat{\mu}$, 
the third column shows the true posterior samples $\nu$, while the fourth column shows samples from the 
data-driven posterior $\widehat{\nu}$. The heatmaps for the posterior were generated 
by reweighting the prior samples using the likelihood.
}\label{Fig_2DResult}
\end{figure}


\begin{figure}[htp]
\begin{overpic}[width=0.3\textwidth]{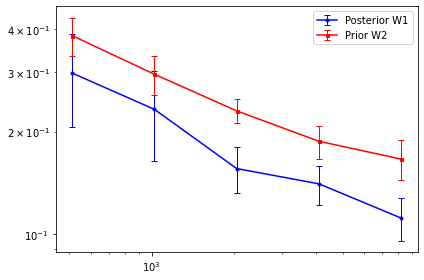}%
\put(400,670){\scriptsize Swissroll}
\put(250,-50){\scriptsize Training sample size}

\put(-30,50){\scriptsize \rotatebox{90}{Wasserstein distance}}
\end{overpic}%
\hspace{0.03\textwidth}%
\begin{overpic}[width=0.3\textwidth]{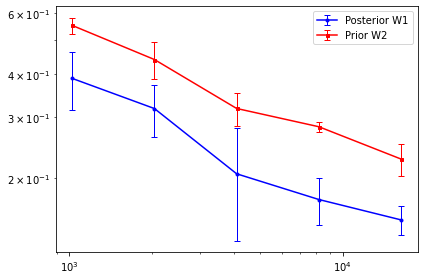}%
\put(350,670){\scriptsize Checkerboard}

\put(250,-50){\scriptsize Training sample size}

\end{overpic}%
\hspace{0.03\textwidth}%
\begin{overpic}[width=0.3\textwidth]{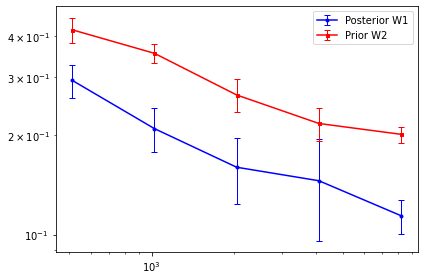}%
\put(400,670){\scriptsize Pinwheel}

\put(250,-50){\scriptsize Training sample size}

\end{overpic}%
\caption{
Comparing the convergence of data-driven priors in the $\W_2$ metric versus the subsequent 
posteriors in the $\W_1$ metric as a function of the training sample size for 
the 2D benchmarks from \Cref{sec:Toy Example}. In all cases the prior $\W_2$ distance controls 
the posterior $\W_1$ distance.  Slopes of linear fits to these curves, indicating 
rate of convergence with the sample size, are reported in 
\Cref{tab:WDslopes}.
}\label{WDsample complexity}
\end{figure}

\begin{figure}
\begin{overpic}[width=0.3\textwidth]{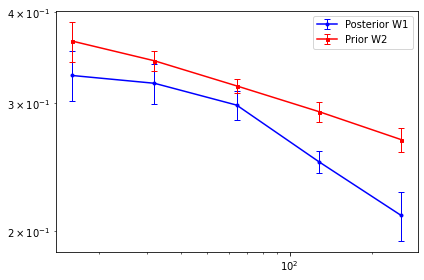}%
\put(420,670){\scriptsize Swissroll}
\put(350,-50){\scriptsize Width of GAN}

\put(-30,50){\scriptsize \rotatebox{90}{Wasserstein distance}}
\end{overpic}%
\hspace{0.03\textwidth}%
\begin{overpic}[width=0.3\textwidth]{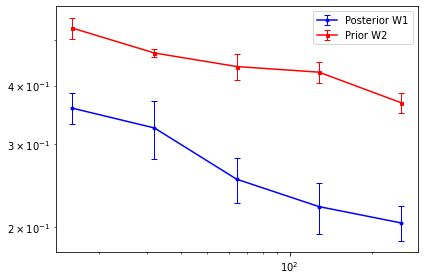}%
\put(370,670){\scriptsize Checkerboard}
\put(350,-50){\scriptsize Width of GAN}

\end{overpic}%
\hspace{0.03\textwidth}%
\begin{overpic}[width=0.3\textwidth]{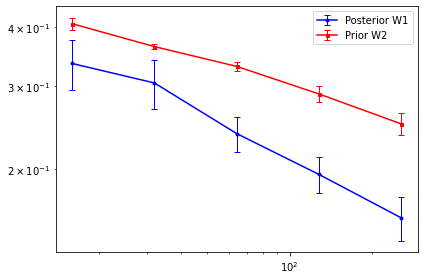}%
\put(430,670){\scriptsize Pinwheel}
\put(350,-50){\scriptsize Width of GAN}

\end{overpic}%
\caption{
Comparing the convergence of data-driven priors in the $\W_2$ metric versus the subsequent 
posteriors in the $\W_1$ metric as a function of the width of the generator for 
the 2D benchmarks from \Cref{sec:Toy Example}. In all cases the prior $\W_2$ distance controls 
the posterior $\W_1$ distance.  Slopes of linear fits to these curves, indicating 
rate of convergence with width of the generator, are reported in 
\Cref{tab:WDslopes}.
}\label{WDwidth}
\end{figure}

\begin{figure}
\begin{overpic}[width=0.3\textwidth]{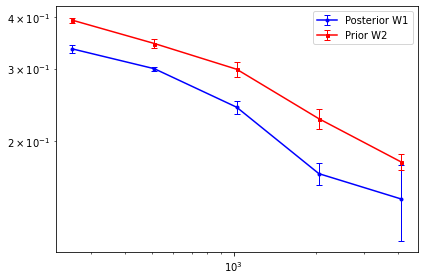}%
\put(420,670){\scriptsize Swissroll}
\put(200,-50){\scriptsize Training Epochs of GAN}
\put(-30,50){\scriptsize \rotatebox{90}{Wasserstein distance}}
\end{overpic}%
\hspace{0.03\textwidth}%
\begin{overpic}[width=0.3\textwidth]{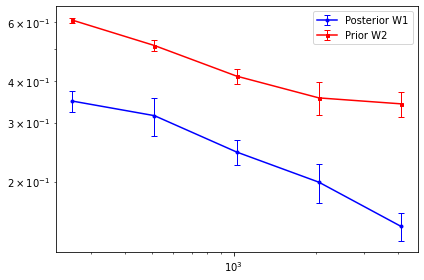}%
\put(370,670){\scriptsize Checkerboard}
\put(200,-50){\scriptsize Training Epochs of GAN}
\end{overpic}%
\hspace{0.03\textwidth}%
\begin{overpic}[width=0.3\textwidth]{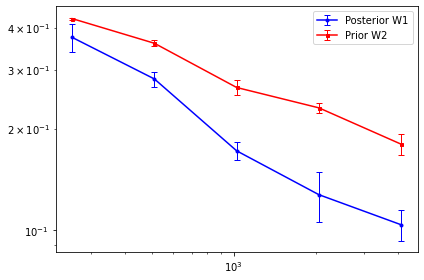}%
\put(430,670){\scriptsize Pinwheel}
\put(200,-50){\scriptsize Training Epochs of GAN}
\end{overpic}%
\caption{
Comparing the convergence of data-driven priors in the $\W_2$ metric versus the subsequent 
posteriors in the $\W_1$ metric as a function of the number of training epochs for 
the 2D benchmarks from \Cref{sec:Toy Example}. In all cases the prior $\W_2$ distance controls 
the posterior $\W_1$ distance.  Slopes of linear fits to these curves, indicating 
rate of convergence with the number of training epochs, are reported in 
\Cref{tab:WDslopes}.
}\label{WDepochs}
\end{figure}

\begin{table}[htp]
    \centering
    \begin{tabular}{ |l |c | c| c| c | c| c|  }
        \hline
        & \multicolumn{2}{|c|}{Training sample size} & \multicolumn{2}{|c|}{Network width} 
        & \multicolumn{2}{|c|}{Training epochs} \\ \hline 
            & Prior $\W_2$ & Post. $\W_1$ & Prior $\W_2$ & Post. $\W_1$ & Prior $\W_2$ & Post. $\W_1$ \\ \hline
        Swissroll & -0,307 &-0.357 &-0.113 &-0.164 & -0.289 & -0.327 \\ \hline
        Checkerboard & -0.321& -0.359& -0.119& -0.218& -0.219& -0.316 \\ \hline
        Pinwheel & -0.281& -0.325& -0.174& -0.282& -0.312& -0.485\\ \hline 
    \end{tabular}
    \caption{We report the slopes of linear fits, in the log-log scale, 
    of the $W_1$ distance of data-driven posteriors compared to 
    the ground truth alognside the $W_2$ distance of data-driven priors. In both cases we report the 
    errors as a function of the sample size of the training data, width of WGAN-gp, and number of training epochs. 
     The left two  columns  correspond to the plots in \Cref{WDsample complexity}, the third and fourth columns correspond to the plots in \Cref{WDwidth}, while the two columns on the right correspond to \Cref{WDepochs}.}
    \label{tab:WDslopes}
\end{table}

\subsection{A PDE Example}\label{sec:HD Example}
We now consider a PDE inverse problem as an instance of a high-dimensional problem, in order to demonstrate 
the effectiveness of generative priors for solving inverse problems in scientific computing. The problem 
considered here is synthetic and designed specifically to demonstrate the efficacy of data-driven 
BIPs in handling complex and multi-modal priors and posteriors arising from PDEs and using simple 
sampling algorithms. We also note that while our analysis in the rest of the paper was tailored to 
finite dimensional parameters, i.e., we always assumed $\U \subseteq \RR^d$, in this example 
the parameter is truly infinite dimensional although we work with a discretization of the problem.
However, we use dimension reduction techniques alongside function space MCMC algorithms 
to ensure the resulting methodology is mesh independent. 

\paragraph{Problem Setup} 
We consider the Darcy flow PDE which is a simple model of the flow of a fluid in a porous medium: 
\begin{equation*}
\begin{cases}
- \text{div} (\exp(u(x)) \nabla p(x)) = f(x), \quad x \in (0,1)^2, & x \in (0,1)^2, \\
p(x) = 0, & x \in \partial(0,1)^2.
\end{cases}    
\end{equation*}
where $u$ is the log of the permeability field and $p$ denotes the pressure. The inverse problem of interest 
is that of estimating the permeability field $\exp(u)$ from pointwise measurements of the 
pressure $p$. To this end, we define the likelihood 
\begin{equation*}
    \Phi(u;y) = \frac{1}{2\sigma^2} \| y - F(u) \|_2^2, 
\end{equation*}
where $F: u \rightarrow (p(x_1) \dots p(x_{300}) \in \RR^{300}$ with the $x_i$ denoting a fixed 
collection of observation points in the domain; see \Cref{HD01Result}. Hence, the forward map $F$ 
consists of the composition of the parameter-to-solution map of the PDE  with a pointwise evaluation map.
We approximate this map by solving the PDE using finite elements.  
To make the problem interesting we assume that the prior on $u$ is the distribution of the MNIST data set. 
Since the map $F$ is smoothing and loses a lot of information about $u$ we expect that a generative 
model trained on MNIST should give rise to a multi-modal posterior since many choices of $u$ can 
generate the observed data.

\paragraph{The generative prior}
We discretized $\U = \R^{784}$ viewing MNIST images as 
piecewise constant functions\footnote{We can also work with a 
PCA dimension reduction to ensure that the formulation is discretization invariant.} and 
trained a WGAN-gp on the MNIST data set as our generative prior; see \Cref{app: HDWGANgp} 
for details. 


\paragraph{Posterior sampling in the latent space}
Since our problem is high dimensional we can no longer use the likelihood to re-weight the prior samples 
as in the 2D benchmarks in \Cref{sec:Toy Example}. Instead, we employ the preconditioned-Crank-Nicolson (pCN) 
algorithm of \cite{cotter2013mcmc} in the latent space of the GAN. To be precise, let $\widehat{T}: z \mapsto u$
denote our GAN, transforming i.i.d. Gaussian noise $\eta$ 
to the MNIST image space. We then consider the inverse problem 
\begin{equation*}
    \frac{\dd \widehat{\gamma}}{\dd \eta}(z) = \frac{1}{Z(y)} \exp \left( - \Phi( \widehat{T}(z) ; y) \right), \qquad 
    Z(y) = \eta( \exp( - \Phi(T(z) ; y) ) ),
\end{equation*}
where $\gamma$ now denotes the posterior on the latent variable $z$ and we slightly abused notation to 
indicate the normalizing constant as $Z(y)$ as before. The data-driven posterior is simply 
given by $\widehat{T} \# \widehat{\gamma}$. In other words, if we use pCN to generate posterior samples 
in the latent space we can then push those samples through the GAN to obtain samples from the 
target posterior $\widehat{\nu}$ supported on the image space. We note that the articles \cite{patel2022solution,bohra2023bayesian} used a similar approach for posterior sampling. 
The resulting procedure is summarized in \Cref{DarcyAlg}.

\paragraph{Experiments and results} 
We considered two experiments using the same set of observation points with the true parameter $u$ 
chosen as an image of the digit "3" as shown in \Cref{HD01Result}(a). We considered two choices of $\sigma$ arising from $10\%$ and $20\%$ noise-to-signal ratio. 

\Cref{HD01Result} summarizes our results in the high noise regime (20\% noise-to-signal ratio) using $2 \times 10^5$ MCMC samples. We observed 
that the posterior mean did not match the correct digit and the posterior variance was high. Indeed, the posterior 
is multimodal as shown in \Cref{HD01images} where we can see that the digits "3, 8, 2, 5" appear in 
posterior samples. This is precisely the behavior we aimed to capture with our method since 
simple MCMC algorithms such as pCN often struggle to traverse the support of multi-model posteriors. 
However, using pCN in the latent space of the GAN appears to resolve this issue in our example. 
\Cref{HD005Result}(d) and (e) show coordinate-wise autocorrelation functions and effective sample sizes
of pCN, indicating good performance and mixing of the algorithm.

\Cref{HD005Result} shows the results of our experiment in the low noise regime with noise-to-signal ration 
of $10\%$ using $10^6$ MCMC samples. As expected, the MCMC chain converges slower in the low noise regime 
due to the concentration of the prior.
We show the same quantities as in the previous experiment. We see that the posterior mean is now 
a close match to the original image and appears to be unimodal based on the independent samples 
shown in \Cref{HD005images}. We believe that in this instance the observed data is of high enough 
quality that allows us to recover a close approximation to the true image.

\begin{algorithm}[htpb!]
\caption{Posterior sampling procedure for the Darcy flow PDE inverse problem}\label{DarcyAlg}
\begin{algorithmic}[1]
\STATE \textbf{Input:}
\STATE Data $y$, noise variance $\sigma$, MNIST generator $\widehat{T}$ with reference measure 
$\eta = \mathcal{N}(0, I)$, 
and discrete parameter to observation map $F$, and pCN step size $\beta \in (0,1)$.
\STATE \textbf{Initialize MCMC:}
\STATE Draw $z^{(0)} \sim \eta$, evaluate the generator $u^{(0)} = \widehat{T}(z^{(0)})$, and 
compute $\Phi(u^{(0)}; y)$
\FOR{$i = 1$ to $N_{\text{sample size}}$}
    \STATE Propose: $z' = \beta z^{(i-1)} + \sqrt{1 - \beta^2}  \xi$ where $\xi \sim \eta$
    \STATE Evaluate the generator: $u' = \widehat{T}(z')$
    \STATE Compute the likelihood $\Phi( u' ; y) $ by evaluating discrete PDE solver
    \STATE Compute acceptance probability: $\alpha = \min\left(1, \exp\left(\Phi(u^{(i-1)};y) - \Phi(u';y)\right)\right)$
    \STATE With probability $\alpha$ accept $z'$ and set $z^{(i)} = z'$ and collect $u^{(i)} = u'$
    \STATE Otherwise reject and set $z^{(i)} = z^{(i-1)}$ and collect $u^{(i)} = u^{(i-1)}$
\ENDFOR
\end{algorithmic}
\end{algorithm}


\begin{figure}[htp]
\centering
\setlength{\tabcolsep}{4pt} 
\renewcommand{\arraystretch}{0} 
\begin{tabular}{ccc}
\begin{overpic}[width=0.22\textwidth]{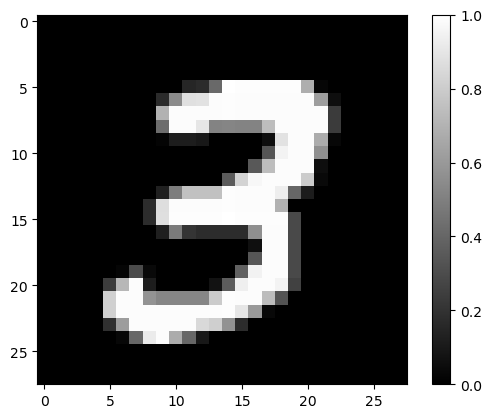}
\put(450,-100){\centering (a)}
\end{overpic} &
\begin{overpic}[width=0.22\textwidth]{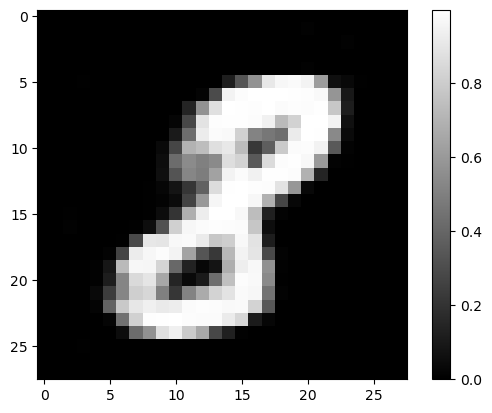}
\put(450,-100){\centering (b)}
\end{overpic} &
\begin{overpic}[width=0.22\textwidth]{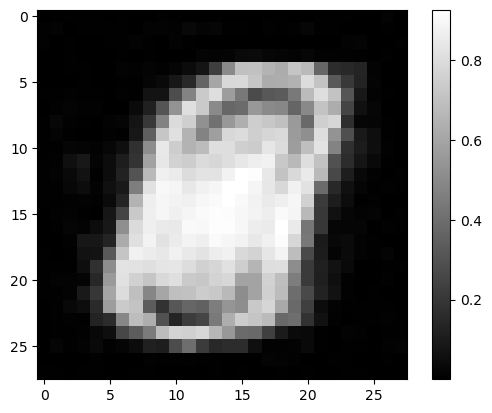}
\put(450,-100){\centering (c)}
\end{overpic}
\\[20pt]
\begin{overpic}[width=0.28\textwidth, height=3cm]{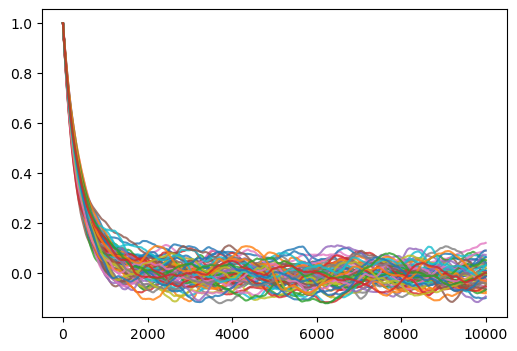}
\put(470,-50){\scriptsize Lag}
\put(-50,300){\scriptsize \rotatebox{90}{ACF}}
\put(470,-150){\centering (d)}
\end{overpic} &
\begin{overpic}[width=0.28\textwidth, height=3cm]{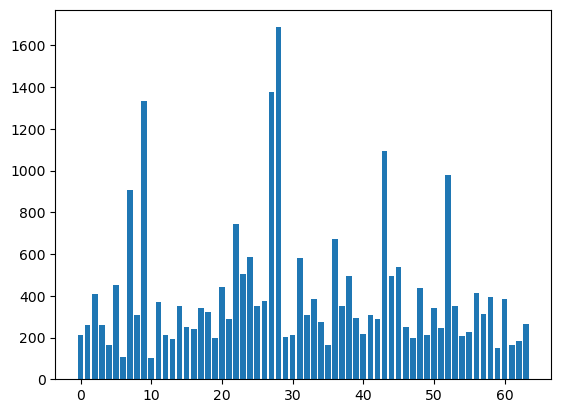}
\put(400,-50){\scriptsize Dimension}
\put(-50,330){\scriptsize \rotatebox{90}{ESS}}
\put(470,-150){\centering (e)}
\end{overpic} &
\begin{overpic}[width=0.22\textwidth, height=3cm]{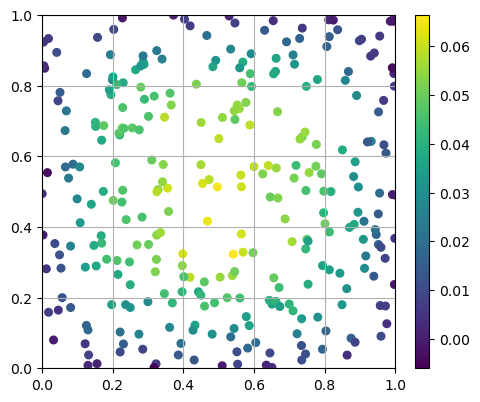}
\put(450,-200){\centering (f)}
\end{overpic} 
\\[20pt]
\end{tabular}
\caption{Results of the Darcy flow experiments with $20\%$ noise-to-signal ratio: (a) the ground truth 
parameter; (b) posterior mean in the image space; (c) pointwise posterior standard deviation in the image 
space; (d) coordinate-wise autocorrelation functions (ACF) of the MCMC chain in the latent space; (e) coordinate-wise effective sample size (ESS) of the MCMC chain in the latent space; 
(f) the data $y$ plotted as the noisy pointwise observations of the pressure field $p$
arising from the true field in panel (a).}\label{HD01Result}
\end{figure}

\begin{figure}[htp]
\centering
\setlength{\tabcolsep}{4pt} 
\renewcommand{\arraystretch}{0} 
\begin{tabular}{ccc}
\begin{overpic}[width=0.9\textwidth]{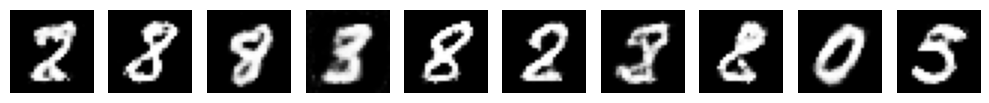}
\end{overpic} &
\\[20pt]
\end{tabular}
\caption{Ten independent posterior samples taken from the MCMC chain (every $10^5$ steps) for our 
experiments with $20\%$ noise-to-signal ratio. The samples show multiple digits, indicating the 
multi-modality of the posterior in the image space.}\label{HD01images}
\end{figure}

\begin{figure}[htp]
\centering
\setlength{\tabcolsep}{4pt} 
\renewcommand{\arraystretch}{0} 
\begin{tabular}{ccc}
\begin{overpic}[width=0.22\textwidth]{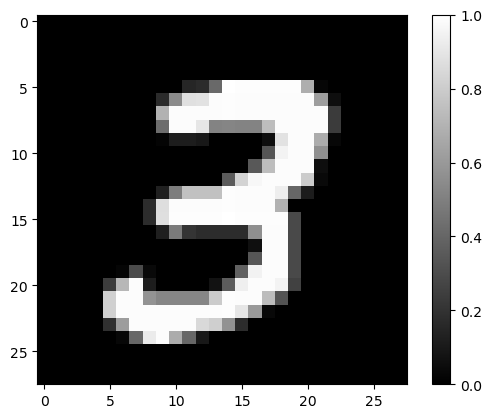}
\put(450,-100){\centering (a)}
\end{overpic} &
\begin{overpic}[width=0.22\textwidth]{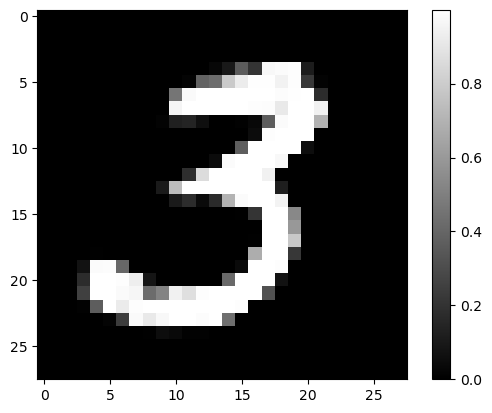}
\put(450,-100){\centering (b)}
\end{overpic} &
\begin{overpic}[width=0.22\textwidth]{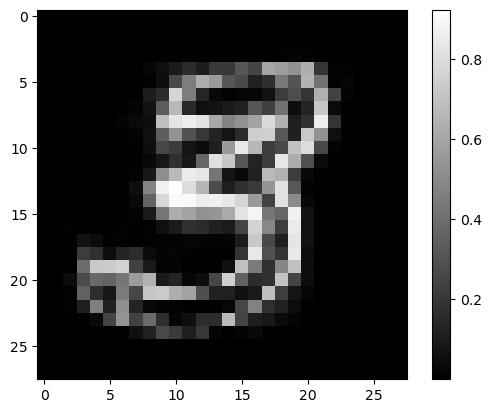}
\put(450,-100){\centering (c)}
\end{overpic}
\\[20pt]
\begin{overpic}[width=0.28\textwidth, height=3cm]{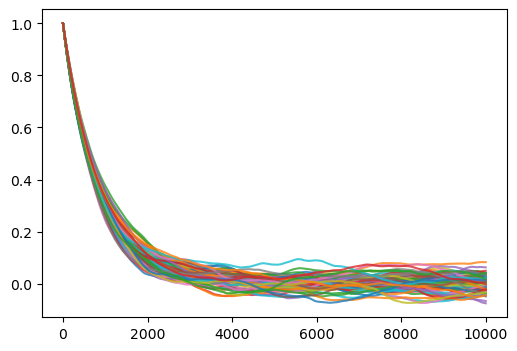}
\put(470,-50){\scriptsize Lag}
\put(-50,300){\scriptsize \rotatebox{90}{ACF}}
\put(470,-150){\centering (d)}
\end{overpic} &
\begin{overpic}[width=0.28\textwidth, height=3cm]{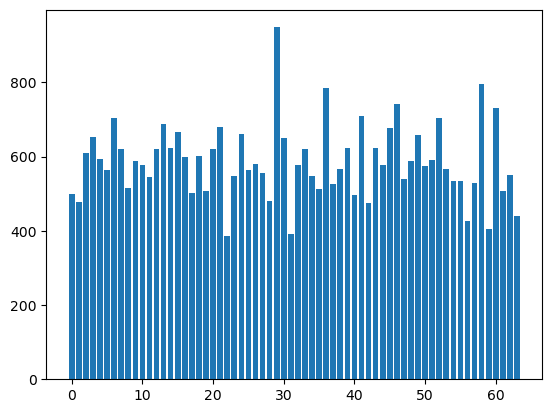}
\put(400,-50){\scriptsize Dimension}
\put(-50,330){\scriptsize \rotatebox{90}{ESS}}
\put(470,-150){\centering (e)}
\end{overpic} &
\begin{overpic}[width=0.22\textwidth, height=3cm]{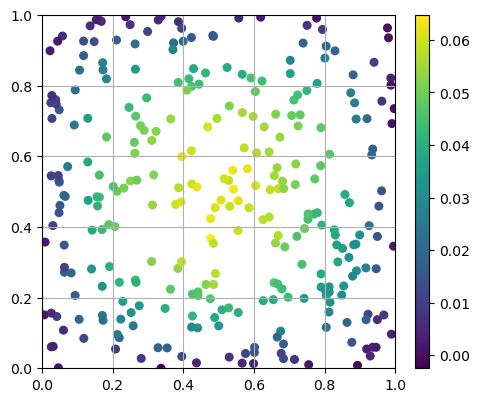}
\put(450,-200){\centering (f)}
\end{overpic} 
\\[20pt]
\end{tabular}
\caption{
Results of the Darcy flow experiments with $10\%$ noise-to-signal ratio: (a) the ground truth 
parameter; (b) posterior mean in the image space; (c) pointwise posterior standard deviation in the image 
space; (d) coordinate-wise autocorrelation functions (ACF) of the MCMC chain in the latent space; (e) coordinate-wise effective sample size (ESS) of the MCMC chain in the latent space; 
(f) the data $y$ plotted as the noisy pointwise observations of the pressure field $p$
arising from the true field in panel (a).
}\label{HD005Result}
\end{figure}

\begin{figure}[htp]
\centering
\setlength{\tabcolsep}{4pt} 
\renewcommand{\arraystretch}{0} 
\begin{tabular}{ccc}
\begin{overpic}[width=0.9\textwidth]{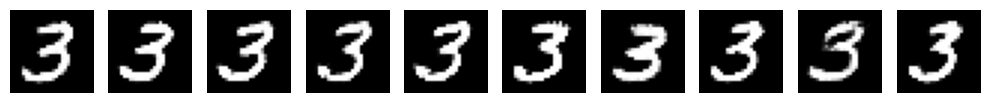}
\end{overpic} &
\\[20pt]
\end{tabular}
\caption{
Ten independent posterior samples taken from the MCMC chain (every $10^5$ steps) for our 
experiments with $10\%$ noise-to-signal ratio. The samples show small variations of the digit "3", indicating the 
multi-modality of the posterior in the image space.
}\label{HD005images}
\end{figure}

\section{Conclusion}\label{sec:Conclusion}
We developed a theoretical analysis of data-driven BIPs with generative priors learned from an 
empirical data. We considered a transport model for our generator and combined ideas from 
the perturbation analysis of BIPs with novel error analysis for minimum Wasserstein generative models 
to obtain quantitative error bounds for data-driven posteriors with respect to a ground truth posterior 
defined in the large data limit of the data-driven prior. Our analysis revealed a simple 
bias-variance trade-off, with an approximation bias coming from our parameterization of 
the generator and a stochastic error coming from the finite training data. However, the details of the 
analysis require various technical assumptions on the smoothness of the likelihood functions as well as 
tail properties of the prior. We further presented numerical experiments that verified the validity of 
some of our bounds in low-dimensional benchmarks where exact sampling of highly multi-modal posteriors was possible.

Our results lead to multiple follow up questions: (i) we primarily considered finite dimensional parameter spaces but most interesting inverse problems in scientific computing concern infinite-dimensional
parameters and many of our results, most notably the empirical convergence rates 
for Wasserstein distance do not readily extend to this setting; 
(ii) most of the constants in our bounds are dependent on the data $y$ and they will deteriorate 
for low-likelihood data, it would be interesting to extend these to high probability bounds or 
bounds in expectation with respect to the data; 
(iii) the minimum Wasserstein analysis is removed from implementation of many generative models since 
the resulting optimization problem requires differentiation through an OT solver. It is interesting 
to extend our analysis to metrics and divergences beyond the Wasserstein distance that are 
more closely related to implementable generative models.

\section*{Acknowledgments}
This work was  supported by the National Science Foundation grants 
NSF-DMS-2208535 (Machine Learning for Bayesian Inverse Problems) and 
NSF-DMS-2337678 (CAREER: Gaussian Processes for Scientific Machine
Learning: Theoretical Analysis and Computational Algorithms).

\bibliographystyle{siamplain}
\bibliography{references, references-divergence-paper}

\appendix 

\section{Sequential training of WGANs}\label{app:WGAN-training}
In the 2D benchmark experiment, we trained a WGAN-gp to generate the priors. In order to increase the accuracy of training result and illustrate the effect of sample size, network width, and training epochs, we applied a sequential training method to our WGAN-gp. 
\subsection{Default Setting for Sequential Training of WGAN-gp} \label{sec:default setting}
Our default WGAN-gp which produced \Cref{Fig_2DResult} was set up with following details:
\begin{itemize}
\item {\it Basic structure of a single model:} Our generator and discriminator in a single model were set to be 3 layers and each layer has width of 128. The input and output dimension of our generator were set to be 2D. After testing our training model, we noticed the discriminator training loss would be unstable with 2D input. To make the training process stable, we add a zero vectors in the input of discriminator so that the input of discriminator is 3D with output dimension to be 1D.

\item{\it Hyper-parameters for training: }In the training process, our default setting is 10000 data points with batch size of 512, the number of total epochs is 10000. During the training, we update our generator every 20 epochs to make sure the discriminator is fully trained. Lastly, our gradient penalty weight was set to be 5 and the learning rate was 0.01. The optimizer we used was SGD.

\item{\it Setting up for sequential training: } After setting up the single WGAN-gp, we use a loop to sequentially train 5 models $G_1, \dots, G_5$. In the first step, the input data is $x_1 \sim N(0,I)$. To keep updates small and smooth, we set the output to be residual such that $y_1 = x_1+G(x_1)$. Then, starts from $G_2$, we choose the input $x_i = y_i = x_{i-1}+G(x_{i-1})$ to sequentially train $G_2, \dots ,G_5$. With these sequential models, our generative priors were accurate enough to illustrate the effect of sample size, network width, and training epochs.
\end{itemize}

\subsection{Hyper-parameter Tuning for analyzing of effect of sample size, model width and training epochs}
\begin{itemize}
\item {\it Tuning for analyzing effect of training sample: } In the experiment of analyzing effect of training sample, we keep our WGAN-gp's architecture with 3 layers and with width of 128. The sequential training models were kept as 5 as well. The training epochs were set to be 10000 and all other hyper-parameters were set as in \Cref{sec:default setting}. With those hyperparameters to be fixed, we change the number of training samples from $2^9$ to $2^13$ for the distribution of swissroll and pinwheel, and from $2^10$ to $2^14$ for the distribution of checkerboard to obtain \Cref{WDsample complexity}.
\item{\it Tuning for analyzing effect of network width: } In the experiment of analyzing effect of width for our WGAn-gp, we keep the training epochs to be 10000 and the number of training samples to be 10000. We also kept the number of layers for each neural network as 3 and the sequential models to be 5. With all other hyper-parameters fixed as in \Cref{sec:default setting}, we modified the width of the neural network from $2^4$ to $2^8$ to obtain \Cref{WDwidth}.
\item{\it Tuning for analyzing effect of training epochs: } In the experiment of analyzing effect of training epochs, we keep the architecture of our networks as 3 layers with width of 128 and 5 steps for sequential training models. The training samples were set to be 10000 and all other hyperparameters were set as in \Cref{sec:default setting}. Then, we changed our training epochs from $2^8$ to $2^12$ to get \Cref{WDepochs}.
\end{itemize}
\section{WGANgp Training for High Dimension Experiments}\label{app: HDWGANgp}
In the High Dimension Experiments, we trained a WGAN-gp to generate MNIST images. Here is the detail:
\begin{itemize}
\item{\it Architecture of the network: } The generator of our WGAN-gp for high-dimensional problem has the input of dimension 64. It then has three internal layers of dimension 256, 512, and 1024. The output dimension needs to be agree with the size of MNIST image, which was 784 (28 by 28). The discriminator has the input dimension of 784 as it needs to be agree with the MNIST image dimension. It has two layers of 512 and 256, with the output dimension as 1. For both the generator and discriminator, inbetween each layers, we set the LeakyReLU activation function with the parameter to be 0.2.

\item{\it Hyperparameters: } To train the WGAN-gp for generating MNIST image, we used the whole MNIST dataset and the batch size was 64. Our total number of epochs was 600 and we update the parameters of generator every 5 epochs. Our learning rate was $10^{-4}$ and the gradient penalty weight was set to be 5. Lastly, we choose Adam as our optimizer for this training task.
\end{itemize}

\section{Finite Element method for Darcyflow PDE solver}\label{app: FEM} To solve the Darcyflow PDE numerically, our Finite Element method was defined as follows: Since we were given the generated MNIST image as input in the size of 28 by 28, we stick with the same size for our preliminary fields in the domain $(0,1)^2$. We applied the Dirichlet Boundary conditions in the domain and the output numerical results were still in the size of 28 by 28.

\section{MCMC for sampling Darcyflow PDE solution}\label{app: MCMC}
The MCMC algorithm for sampling the posterior were defined in \Cref{DarcyAlg}. For each sampling task, we set the first 20\% steps to be the burning up process in order to tuning the step size parameter $\beta$. During the burning up process, for every 100 steps, if the average acceptance rate $\alpha_{avg} > 0.4$, we set $\beta_{new} = \beta /2$; if the average acceptance rate $\alpha_{avg} < 0.2$, we set $\beta_{new} = ((1-beta)/2)$. Therefore, with enough burning up steps, we could control the acceptance rate after burning up to be around $0.3$, which was usually a proper acceptance rate for the effectiveness of MCMC algorithm.
\end{document}